\newtcolorbox{block}{
    colback=blue!5!white,
    colframe=blue!75!black,
    fonttitle=\bfseries
}
\newtheorem{theorem}{Theorem}[section]
\newtheorem{lemma}[theorem]{Lemma}
\newtheorem{definition}[theorem]{Definition}
\newcommand{\al}{\alpha}
\newcommand{\g}{\gamma}
\newcommand{\M}{\mathcal{M}}
\newcommand{\X}{\mathcal{X}}
\newcommand{\pr}{\mathbb{P}}
\newcommand{\E}{\mathbb{E}}
\newcommand{\leqs}{\leqslant}
\newcommand{\w}{\widetilde}
\title{Traversal Verification for Speculative Tree Decoding}
\author{
Yepeng Weng$^{1}$\thanks{Equal contribution. Contact: wengyp1@lenovo.com, huqiao2020@amss.ac.cn} \quad Qiao Hu$^{2*}$\thanks{Corresponding author.} \quad Xujie Chen$^{1}$ \quad Li Liu$^{1}$ \\ 
\textbf{\quad Dianwen Mei}$^{1}$ \quad \textbf{Huishi Qiu}$^{1}$ \quad \textbf{Jiang Tian}$^{1}$ \quad \textbf{Zhongchao Shi}$^{1}$\\
$^1$Lenovo AI Technology Center, Lenovo \\ $^2$ National Center for Mathematics and Interdisciplinary Sciences (NCMIS), AMSS, CAS}
\begin{document}

\maketitle

\begin{abstract}
Speculative decoding is a promising approach for accelerating large language models. The primary idea is to use a lightweight draft model to speculate the output of the target model for multiple subsequent timesteps, and then verify them in parallel to determine whether the drafted tokens should be accepted or rejected. To enhance acceptance rates, existing frameworks typically construct token trees containing multiple candidates in each timestep. However, their reliance on token-level verification mechanisms introduces two critical limitations: First, the probability distribution of a sequence differs from that of individual tokens, leading to suboptimal acceptance length. Second, current verification schemes begin from the root node and proceed layer by layer in a top-down manner. Once a parent node is rejected, all its child nodes should be discarded, resulting in inefficient utilization of speculative candidates. This paper introduces \emph{Traversal Verification}, a novel speculative decoding algorithm that fundamentally rethinks the verification paradigm through leaf-to-root traversal. Our approach considers the acceptance of the entire token sequence from the current node to the root, and preserves potentially valid subsequences that would be prematurely discarded by existing methods. We theoretically prove that the probability distribution obtained through Traversal Verification is identical to that of the target model, guaranteeing \emph{lossless} inference while achieving substantial acceleration gains. Experimental results on various models and multiple tasks demonstrate that our method consistently improves acceptance length and throughput over token-level verification.
\end{abstract}

\section{Introduction}

Large Language Models (LLMs) have been widely adopted due to their exceptional performance across various natural language processing tasks \cite{grattafiori2024llama3herdmodels, DBLP:journals/corr/abs-2303-08774, qwen2} . However, the massive parameters and the autoregressive generation scheme of transformer decoder-only \cite{DBLP:conf/nips/VaswaniSPUJGKP17} LLMs limit the generation speed. Speculative decoding \cite{DBLP:conf/icml/LeviathanKM23, DBLP:journals/corr/abs-2302-01318} is an lossless acceleration technique which employs a lightweight model (draft model) with fewer parameters to speculate the output tokens of the original LLM (target model) for several future timesteps, then feed the drafted tokens into the target model in parallel. After getting the probability distribution of the target model, speculative decoding determines the acceptance or rejection of each token based on their probabilities in both target and draft models. If a token is rejected, a new token will be resampled and all subsequent tokens should be discarded.

To further improve acceleration performance, existing methods \cite{DBLP:conf/asplos/MiaoOZCWZWZYSSC24, ChenSequoia, LiEAGLE, JeonRSD, YangMCSD} generate multiple candidates at each drafting timestep, forming a tree of drafted tokens. However, these methods generally inherit the token-level verification mechanism from vanilla speculative decoding to tree scenarios, resulting in suboptimal acceptance lengths in tree decoding. To be more specific, firstly, the probability distribution of a token sequence differs from that of an individual token. Vanilla speculative decoding determines acceptance based on per-token probabilities, which sacrifices global optimality for sequence-level acceptance. Secondly, existing tree decoding methods start verification from the root node of the tree, and proceed layer by layer in a top-down manner. Once a parent node is rejected, all its child nodes will be discarded accordingly, resulting in the wasting of drafted tokens.

To address these issues, we propose a novel speculative decoding method named \emph{Traversal Verification}. Unlike existing methods, Traversal Verification starts from the leaf node and generally operates in a bottom-up manner. If the node is accepted, the entire sequence from the current node to the root is accepted. If rejected, the algorithm proceeds to verify the sibling nodes (or the deepest child nodes of its siblings if they exist). If all siblings are rejected, it backtracks to the parent node. This process repeats until either a node is accepted or all nodes in the tree are rejected.

Through Traversal Verification, we effectively resolve the limitations of existing methods. First, we consider sequence-level probabilities instead of individual token probabilities and improve the acceptance lengths. Second, in Traversal Verification, a parent node will be verified only after all its child nodes have been rejected, which minimizes the wasting of drafted candidates.

We conducted experiments on Llama3 \cite{grattafiori2024llama3herdmodels} series and Llama2 \cite{llama2} using various tree structures. The experiments were performed on the Spec-Bench dataset \cite{XiaSpecbench}, which encompasses six different tasks: multi-turn conversation, translation, summarization, question answering, mathematical reasoning, and retrieval-augmented generation. Experimental results demonstrate that Traversal Verification consistently outperforms existing decoding methods by 2.2\%-5.7\% in average acceptance length across diverse tasks with different tree architectures. Additionally, Traversal Verification could potentially achieve greater improvements for deeper and larger decoding trees.

We highlight the advantages of Traversal Verification as follows:
\begin{enumerate}
\item \textbf{Full utilization of drafted tokens.} Traversal Verification enhances acceptance length and improves the utilization of drafted tokens by considering sequence-level probability distributions and systematically traversing nodes in the token tree. To our knowledge, it is the \emph{first} verification algorithm that makes use of the whole token tree.
\item \textbf{Reliable generation quality.} We theoretically prove that Traversal Verification is a \emph{lossless} verification algorithm, that is, the output distribution is identical to that of the target model. This serves as a powerful guarantee of generation quality.
\item \textbf{Pronounced improvement.} Experiments across various tree structures and datasets shows that Traversal Verification outperforms token-level verification. We also rigorously prove that Traversal Verification is \emph{theoretically optimal} in the case of a single chain. 
\item \textbf{Minimal implementation modification.} Traversal Verification serves as a \emph{plug-and-play} replacement of existing verification methods. There is no need to change other parts of existing speculative decoding pipelines. 
\end{enumerate}
\section{Preliminaries}

\subsection{Speculative Decoding}

\begin{minipage}{0.48\linewidth}
Speculative decoding, also known as speculative sampling \cite{DBLP:conf/icml/LeviathanKM23, DBLP:journals/corr/abs-2302-01318}, is a lossless LLM acceleration algorithm. In speculative decoding, a draft model first generates a chain of $\gamma$ new tokens (\emph{i.e.,} one token per timestep for the next $\gamma$ timesteps), then the drafted tokens are fed into the target model in parallel to get the target distribution.

We denote the drafted token chain by \(\alpha^\gamma = (\alpha_0, \alpha_1, \dots, \alpha_\gamma)\), where \(\alpha_0\) represents the prefix and \(\alpha_{>0}:=(\alpha_1, \dots, \alpha_\gamma)\) denotes the \(\gamma\) new tokens generated by the draft model. After obtaining the target distribution $\M_b$, the drafted tokens will be verified from timestep $1$ to $\gamma$ following Algorithm \ref{alg:vanillaSD}. 

\end{minipage} \hfill%
\begin{minipage}{0.48\linewidth}
\centering
\begin{algorithm}[H]
\begin{algorithmic}[1]
\REQUIRE Prefix $X_0$; draft token $X$; draft distribution $\M_s(\cdot|X_0)$; target distributions $\M_b(\cdot|X_0)$ and $\M_b(\cdot|X_0,X)$.
\STATE {Sample $\eta \sim U(0,1)$.}
\IF {$\eta < \frac{\M_b(X|X_0)}{\M_s(X|X_0)}$}
    \STATE{Sample $Y$ from $\M_b(\cdot|X_0,X)$.}
    \RETURN $X,Y.$
\ELSE
    \STATE{Sample $Y$ from $\text{norm}([\M_b-\M_s]_+)$.}
    \RETURN $Y.$
\ENDIF
\captionof{algorithm}{Single-token verification}
\label{alg:vanillaSD}
\end{algorithmic}
\end{algorithm}
\end{minipage}

\begin{minipage}{0.48\linewidth}
\centering
\renewcommand\arraystretch{1.3}
\begin{tabular}{cc|ccc}
     \multicolumn{2}{c|}{\multirow{2}{*}{\diagbox{$\M_b$}{$\M_s$}}}  & $a$ & $b$ & $c$  \\ 
     ~ & ~ & 0.6 & 0.3 & 0.1 \\ \hline 
    $a$ & 0.3 & \cellcolor{green!50}{0.3} & 0 & 0 \\ 
    $b$ & 0.4 & \cellcolor{cyan!50}{0.1} & \cellcolor{green!50}{0.3} & 0 \\ 
    $c$ & 0.3 & \cellcolor{cyan!50}{0.2} & 0 & \cellcolor{green!50}{0.1} \\ 
\end{tabular}
\captionof{table}{An example of single-token verification}
\label{tab:SD example}
\end{minipage} \hfill%
\begin{minipage}{0.48\linewidth}
    \centering
    \includegraphics[width=0.77\linewidth]{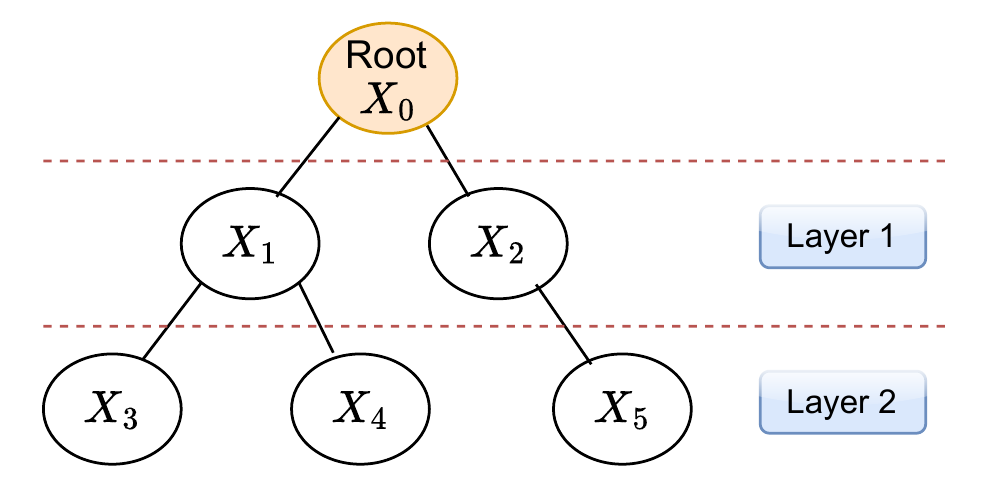}
    \captionof{figure}{An example token tree}
    \label{fig:example tree}
\end{minipage}

If a token is accepted, the verification proceeds to the next timestep. Once a token is rejected, all subsequent tokens in the chain are discarded, and a new token will be resampled at the rejection position based on the residual probability distribution. If all $\gamma$ tokens are accepted, an additional token is sampled from the target distribution at the timestep $\gamma+1$. The output of a drafting-verification cycle thus consists of all accepted tokens plus the resampled or newly sampled token at the final step.  

To illustrate the acceptance mechanism intuitively, consider a simplified example with a vocabulary of three tokens: [$a$, $b$, $c$]. Let the target model’s probability distribution be $ \M_b = [0.3, 0.4, 0.3] $, and the draft model’s distribution be $ \M_s = [0.6, 0.3, 0.1] $. All possible cases are summarized in Table \ref{tab:SD example}.  

According to Algorithm \ref{alg:vanillaSD}, if token $b$ is sampled, it will be accepted directly because $ \M_b(b) > \M_s(b) $. Similarly, $c$ will be accepted if sampled. If $a$ is sampled, the acceptance probability is $ \frac{\M_b(a)}{\M_s(a)} = 0.5 $. Thus, the probability of generating token $a$ is $ \pr(\text{sample } a) \times \pr(\text{accept } a) = 0.3 $, which is equal to $ \M_b(a) $. These cases correspond to the diagonal entries in Table \ref{tab:SD example}, highlighted in \textcolor{green!50}{green}. 

Besides being accepted, token $a$ also faces a rejection probability of 0.5. Upon rejection, a new token is resampled from the residual probability distribution $ \text{norm}([\M_b - \M_s]_+) $. Specifically, we subtract $ \M_s $ from $ \M_b $ and set the negative values to zero (yielding $[0, 0.1, 0.2]$ in this example), and then normalize the residual probabilities. Therefore, the final probabilities for $b$ and $c$ consist of two parts: 1) direct acceptance after sampling from $\M_s$ and 2) resampling after rejection of $a$, indicated in \textcolor{cyan!50}{cyan} in Table \ref{tab:SD example}. By this means, the final distribution is kept identical to $ \M_b $.  

\subsection{Recursive Rejection Sampling}
\label{sec:rrs}
Recursive Rejection Sampling (RRS) samples multiple candidates at each timestep and recursively verifies them, as described in Algorithm \ref{alg:rrsw}. Recent works \cite{ChenSequoia, LiEAGLE, JeonRSD, YangMCSD} further refine RRS into RRS without replacement (RRSw), where the probability of a rejected token in $ \M_s $ is set to zero, and then normalize $ \M_s $ of the remaining candidates. RRSw prevents repeated sampling and rejection of the same token, especially for low-temperature situations, thereby improving overall acceptance rates.  

\begin{minipage}{0.47\linewidth}
We illustrate RRSw using the same example in Table \ref{tab:SD example}. Suppose that token $a$ is sampled and rejected. The residual distribution becomes $\M_b' = \text{norm}([\M_b - \M_s]_+) = [0, 1/3, 2/3] $, while the new draft distribution $ \M_s' = \text{norm}(\M_s(a) = 0) = [0, 3/4, 1/4] $. Then we sample a new token from $ \M_s' $ and repeat the speculative decoding scheme: If token $b$ is sampled, it is accepted with probability $ \frac{\M'_b(b)}{\M'_s(b)} = 4/9 $.  If $c$ is sampled, it is always accepted since $ \M'_b(c) > \M'_s(c) $. For scenarios with more candidates, this process iterates until all candidates are verified. 

Combining chain-based speculative decoding with multi-candidate per timestep yields tree decoding. In the current framework, candidate tokens are verified layer by layer from shallow to deep: if a node is rejected, we continue to verify its siblings; the current node itself and all its children are discarded. If a node is accepted, the verification proceeds to its child nodes in the deeper layer.

\end{minipage}\hfill
\begin{minipage}{0.51\linewidth}
\begin{algorithm}[H]
\begin{algorithmic}[1]
\REQUIRE Prefix $X_0$; draft distribution $\M_s(\cdot|X_0)$; $k$ drafted candidates $\{X_i\}_{i=1}^k$ from $\M_s(\cdot|X_0)$; target distributions $\M_b(\cdot|X_0)$ and $\M_b(\cdot|X_0,X_i)$, $\forall 1\leqs i\leqs k$.
\STATE {Initialize residual $\M'_b$ with $\M_b$ and draft $\M_s'$ with $\M_s$.}
\FOR {i=1,\dots,k}
\IF {$\eta < \frac{\M'_b(X_i)}{\M'_s(X_i)}$}
\STATE{Sample $Y$ from $\M_b(\cdot|X_0,X_i)$.}
\RETURN $X_i,Y$.
\ELSE
    \STATE{$\M'_b\leftarrow \text{norm}([\M_b'-\M_s']_+)$.}
    \IF {\textcolor{blue}{without replacement}}
        \STATE{\textcolor{blue}{$\M'_s\leftarrow \text{norm}(\M_s'(X_i)=0)$.}}
    \ENDIF
\ENDIF
\ENDFOR
\STATE {Sample $Y$ from $\M_b'(\cdot|X_0)$.}
\RETURN $Y$.
\captionof{algorithm}{Recursive Rejection Sampling}
\label{alg:rrsw}
\end{algorithmic}
\end{algorithm}
\end{minipage}

We demonstrate the token-level verification order using a simplified two-layer decoding tree, as shown in Figure \ref{fig:example tree}. In this tree, node $X_1$ is verified first. If accepted, we proceed to its children ($X_3 \text{ and } X_4$) and verify them sequentially. If $X_1$ is rejected, we discard $X_1$, $X_3$, $X_4$, and go to $X_2$. If $X_2$ is accepted, we continue to verify $X_5$, otherwise, since all the sampled tokens are rejected, we will resample a new token from the residual probability distribution of Layer 1.

\section{Method}
In this section, we first introduce Traversal Verification. Subsequently, we illustrate its distinctions from token-level tree decoding (vanilla speculative decoding with RRSw) through an intuitive example (see Figure \ref{fig:3.2example}). In the last part of this section, we discuss the theoretical guarantees, such as the losslessness of Traversal Verification, and its optimality in single chain scenarios.

\subsection{Traversal Verification}  
We present Traversal Verification in Algorithm \ref{alg:TV}.
\begin{algorithm}
\begin{algorithmic}[1]
\REQUIRE {Prefix $X_0$ as the root; a valid sampling tree $T$ on draft distribution $\M_s$; for all chains $\forall \al=(X_0,\dots,X_{\g_\al}) \subset T$, draft distributions $\forall i<\g_\al$, $\M_s(\cdot|X^i)$ and target distributions $\forall i\leqs\g_\al$, $\M_b(\cdot|X^i)$.}
		
\STATE {{\bf Initialize:} For all chains $\forall \al=(X_0,\dots,X_{\g_\al})\subset T$, let $p^{ini}_\al(X_0)=1$ and then recursively set the acceptance rates for all nodes of $\al$,
$$p^{ini}_\al(X_i)=\min\left\{ p^{ini}_\al(X_{i-1})\frac{\M_b(X_i|X^{i-1})}{\M_s(X_i|X^{i-1})}, 1 \right\}, \quad 1\leqs i\leqs \g_\al.$$}
\STATE {Set $p_\al(X_i)=p^{ini}_\al(X_i)$, $\forall X_i\in \al, \forall \al\subset T$, and the acceptance length $\tau=0$}
\WHILE {$T \neq \emptyset$}
    \STATE {Select $\al=(X_0,\dots,X_{\g_\alpha})\subset T$ from root to the first leaf node, with $\g_\al$ being its depth.}
    \STATE {Sample $\eta \sim U(0,1)$.}
    \IF {$\eta< p_\al(X_{\g_\al})$}
        \STATE {$\tau=\g_\al$ and $X^\tau=(X_0,\dots,X_{\g_\al})$.}
        \STATE {\bf break.}
    \ELSE
        \STATE {Delete the last node of $\al$ from the tree $T$, that is $T \leftarrow T-\{X_{\g_\al}\}$. }
        \STATE {Set the residual and draft distributions by \eqref{e1} and \eqref{e2}, \emph{i.e.,}
        $$\M'_b(x|X^{\g_\al-1})= \text{norm}([p_\al(X_{\g_\al-1})\M_b(x|X^{\g_\al-1})-\M_s(x|X^{\g_\al-1})]_+),$$ $$\M'_s(x|X^{\g_\al-1})=\text{norm}(\M'_s(X_{\g_\al}|X^{\g_\al-1})=0).$$}
        \STATE{Set
        $p'_\al(X_{\g_\al-1})$ as \eqref{e3} and then modify $$p_\alpha(X_{\g_\al-1})\leftarrow p'_\al(X_{\g_\al-1}),$$
        $$\M_b(x|X^{\g_\al-1})\leftarrow\M'_b(x|X^{\g_\al-1}), \quad \M_s(x|X^{\g_\al-1})\leftarrow\M'_s(x|X^{\g_\al-1}), \quad \forall x\in\X$$}
        \STATE{Update the acceptance rates for remaining chains $\beta=(x_0,\dots,x_{\g_\beta})\subset T$ with the starting nodes $x^{\g_\al-1}=X^{\g_\al-1}$,
        $$p_\beta(x_i)=\min\left\{p_\beta(x_{i-1})\frac{\M_b(x_i|x^{i-1})}{\M_s(x_i|x^{i-1})}, 1 \right\}, \quad \g_\al\leqs i\leqs \g_\beta.$$ }
    \ENDIF
\ENDWHILE
\STATE {Sample $Y$ from $\M_b(\cdot|X^\tau)$.}
\RETURN {$X^\tau,Y$.}
\end{algorithmic}
\caption{Traversal Verification}
\label{alg:TV}
\end{algorithm}

\begin{block}
    Residual distribution in Algorithm \ref{alg:TV} (Line 11): $\forall x\in\X$,
    \begin{equation}\label{e1}
        \M'_b(x|X^{\g_\al-1})= \frac{[p_\al(X_{\g_\al-1})\cdot\M_b(x|X^{\g_\al-1})-\M_s(x|X^{\g_\al-1})]_+}{\sum_{x}[p_\al(X_{\g_\al-1})\cdot\M_b(x|X^{\g_\al-1})-\M_s(x|X^{\g_\al-1})]_+}.
    \end{equation}
    Modified draft distribution in Algorithm \ref{alg:TV} (Line 11): $\forall x\in\X$,
    \begin{equation}\label{e2}
        \M'_s(X_{\g_\al}|X^{\g_\al-1})=0 {\rm~and~}\M'_s(x|X^{\g_\al-1}) \leftarrow \frac{\M_s(x|X^{\g_\al-1})}{1-\M_s(X_{\g_\al}|X^{\g_\al-1})} \text{ if } x\neq X_{\g_\al}.
    \end{equation}
    Acceptance probability in Algorithm \ref{alg:TV} (Line 12):
    \begin{equation}\label{e3}
        p'_\al(X_{\g_\al-1})= \frac{\sum_{x}[p_\al(X_{\g_\al-1})\cdot\M_b(x|X^{\g_\al-1})-\M_s(x|X^{\g_\al-1})]_+}{\sum_{x}[p_\al(X_{\g_\al-1})\cdot\M_b(x|X^{\g_\al-1})-\M_s(x|X^{\g_\al-1})]_+ + 1-p_\al(X_{\g_\al-1})}
    \end{equation}
\end{block}

Traversal Verification exhibits two key distinctions from token-level tree decoding: 
\begin{enumerate}
    \item \textbf{Bottom-up verification}. Traversal Verification generally operates in a bottom-up manner, starting verification from leaf nodes (\emph{i.e.,} deeper layers) and progressing toward the root, while token-level tree decoding follows a top-down approach, verifying nodes layer by layer from shallow to deep. Details about traversal order are provided in Appendix \ref{sec:traversal order}.
    \item \textbf{Sequence-level acceptance}. Traversal Verification incorporates the joint probability distribution of the token sequence, rather than relying solely on per-token probabilities. The acceptance rate at each node represents the sequence-level acceptance rate from the current node to the root. Thus, once a token is accepted, the entire sequence from the current node to root is accepted.
\end{enumerate}

\subsection{An Intuitive Example of Traversal Verification}
We now demonstrate Traversal Verification using the same illustrative case as introduced in Section \ref{sec:rrs}. Following Algorithm \ref{alg:TV}, for the tree structure in Figure \ref{fig:example tree}, the traversal order is $X_3 \rightarrow X_4 \rightarrow X_1 \rightarrow X_5 \rightarrow X_2$. Consider a tree with nodes sampled as 
$[X_1,X_2,X_3,X_4,X_5]=[a,c,b,c,a]$ as an intuitive example. We present the detailed process of Traversal Verification in Figure \ref{fig:3.2example}.

\begin{figure}[h]
\centering
\includegraphics[width=0.98\linewidth]{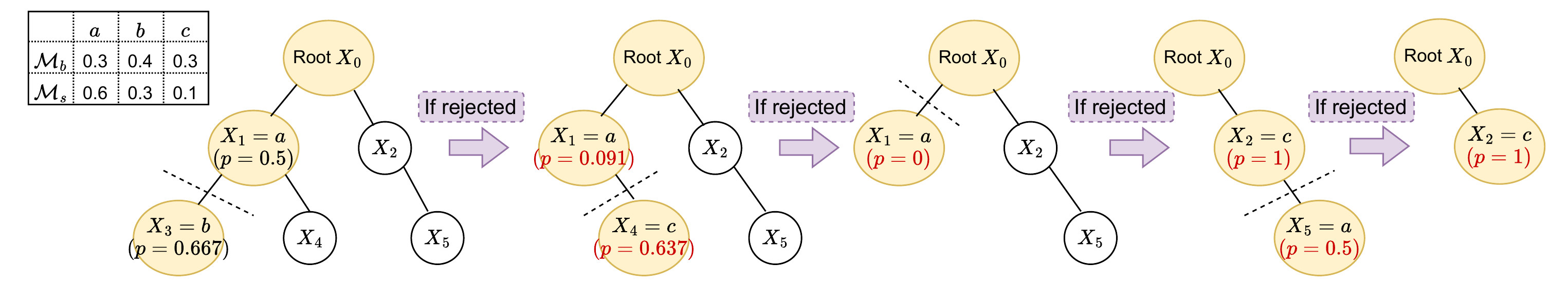}
\caption{The traversal order of verifying a sampling tree.}
\label{fig:3.2example}
\end{figure}

We define $ r(X_i) = \frac{\M_b(X_i)}{\M_s(X_i)} $ for simplification. For the first chain $X_1X_3$,  the acceptance rate of Traversal Verification is
\begin{equation}
\pr_{\text{traversal}}(\text{accept }X_1 X_3) = \min\left( \min(r(X_1), 1) \cdot r(X_3), 1 \right) = \min(0.5 \cdot 0.4/0.3, 1) \approx 0.667,
\nonumber
\end{equation}
However, in token-level verification, the acceptance probability is only
\begin{equation}
\pr_{\text{token-level}}(\text{accept }X_1 X_3) = \min(r(X_1), 1) \cdot \min(r(X_3), 1) = 0.5. 
\nonumber
\end{equation}
When $X_1 X_3$ is rejected, we delete the last node $X_3$ and then the first chain becomes $X_1X_4$. According to Line 11-13 in Algorithm \ref{alg:TV}, since $[p(X_1)\M_b(a)-\M_s(a)]_+=0$ and $[p(X_1)\M_b(c)-\M_s(c)]_+=0.05$, the new $p'(X_1)$ and the acceptance rate of chain $X_1X_4$ are updated as
\begin{equation*}
   p'(X_1)=\frac{0.05}{0.05+1-0.5}\approx 0.091, 
\end{equation*}
and 
\begin{equation*}
\pr(\text{accept }X_1X_4)=\min\left(p'(X_1)\cdot\frac{\M'_b(X_4)}{\M'_s(X_4)},1\right)\approx 0.637.
\end{equation*}
If $X_4$ is rejected, the residual acceptance probability of $X_1$, namely $p'(X_1)$, is reduced to zero, indicating that it cannot be accepted any more and should be removed immediately.

After node $X_1$ is discarded, the draft and target distributions of token-level verification and Traversal Verification in Layer 1 return to the same starting line once again. Then for the single chain $X_2X_5$, Traversal Verification still expects longer acceptance than token-level verification (see Theorem \ref{thm:chain_op}).

\subsection{Theoretical Guarantees}
\label{theoretical guarantees}
In this section, we formally establish the theoretical guarantees of Traversal Verification. Specifically, we prove that the following statements hold for Traversal Verification:
\begin{enumerate}
    \item Traversal Verification is a {\em valid} ({\em i.e., lossless}) tree verification algorithm, which means the probability distribution of output sequences is identical to that of the target model.
    \item In the special case where the sampling tree is a single chain, Traversal Verification achieves the optimal upper-bound of expectation of acceptance length.
\end{enumerate}
We first formally define the decoding tree under autoregressive generation as follows:
\begin{definition}[Decoding tree under autoregressive generation]\label{de_tree}
Let $\M_s$ be a given distribution and $T$ be a decoding tree rooted at $X_0$ under autoregressive generation. For all chains $v = (X_0,\dots,X_{\gamma_v}) \subset T$ where $\gamma_v$ denotes the depth of chain $v$, if all child nodes of $v$ are generated according to the conditional distribution $\M_s(\cdot|v)$ (with or without replacement), then the sampling tree $T$ is termed a decoding tree based on $\M_s$ under autoregressive generation. 

For brevity, we hereafter refer to a tree satisfying the above definition as a \em{decoding tree}.
\end{definition}

Given an decoding tree $T$, we prove that Traversal Verification serves as a valid tree verification algorithm. A valid tree verification algorithm is defined as follows:
\begin{definition}[Valid tree verification algorithm]\label{def2}
Let $T$ be a decoding tree defined as Definition \ref{de_tree}. For all chains $v = (X_0,\dots,X_{\gamma_v}) \subset T$ with depth $\gamma_v$, a tree verification algorithm $\mathcal{A}_{\mathrm{ver}}$ takes the tree $T$, draft model distributions $\mathcal{M}_s(\cdot|X^i)$, $\forall i < \gamma_v$ and target model distributions $\mathcal{M}_b(\cdot|X^i)$, $\forall i \leqs \gamma_v$ as inputs, and outputs an accept chain $X^\tau = (X_0,\dots,X_\tau) \subset T$ where $\tau \leqs \max_{v \subset T} \gamma_v$ and an additional token $Y$.
	
The tree verification algorithm $\mathcal{A}_{\mathrm{ver}}$ is called valid if its output distribution satisfies  
\begin{equation}\label{e7-1}
(X^\tau, Y) = \mathcal{A}_{\mathrm{ver}}(T, \mathcal{M}_s, \mathcal{M}_b) \sim \mathcal{M}_b(\cdot|X_0),
\end{equation}
where $\mathcal{M}_b(X_0|X_0)=1$. 

Additionally, the tree verification $\mathcal{A}_{\mathrm{ver}}$ is also called a valid chain verification algorithm if $T$ is a single chain and $\mathcal{A}_{\mathrm{ver}}$ satisfies \eqref{e7-1}.
\end{definition}

For example, SpecInfer \cite[Theorem 4.2]{DBLP:conf/asplos/MiaoOZCWZWZYSSC24} is a valid tree verification algorithm. In the case where the sampling tree degenerates into a single chain, both the vanilla token verification \cite[Appendix.A]{DBLP:conf/icml/LeviathanKM23} and Block Verification \cite[Theorem 1]{blockV} are valid chain verifications. 

We now claim that Traversal Verification is a valid tree verification algorithm and is an optimal valid chain verification algorithm with $T$ being a single chain.

\begin{theorem}[Losslessness of Traversal Verification]\label{thm:TV_valid}
Traversal Verification (Algorithm \ref{alg:TV}) is a valid tree verification algorithm.
\end{theorem}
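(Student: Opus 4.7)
The plan is to prove validity by structural induction on the number of nodes $|T|$ of the sampling tree, conditioning at each iteration on whether the leaf selected by the algorithm is accepted or rejected, and tracking how the distribution updates~\eqref{e1}--\eqref{e3} preserve the correct target marginal. The base case $|T|=1$ is immediate: the while loop is vacuous, $\tau=0$, and $Y$ is drawn directly from $\M_b(\cdot|X_0)$, so \eqref{e7-1} holds.

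Before attacking the general tree I would handle the single-chain reduction as a warm-up. If $T$ is a single chain $(X_0,\dots,X_\gamma)$, the algorithm first tests the leaf $X_\gamma$ with acceptance probability $p^{ini}_\al(X_\gamma)$ and, upon rejection, deletes $X_\gamma$, replaces the parent-level distributions by the primed versions in~\eqref{e1}--\eqref{e2}, updates $p_\al(X_{\gamma-1})$ via~\eqref{e3}, and recurses on the shorter chain. Induction on chain length together with the residual-construction identity described below closes this case and mirrors the mass-preservation mechanism used to prove Block Verification valid.

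For a general tree with $|T|\geqs 2$ the algorithm picks a chain $\al=(X_0,\dots,X_{\g_\al})$ from the root down to the first leaf and tests $X_{\g_\al}$. On acceptance (probability $p^{ini}_\al(X_{\g_\al})$) the output is $\al$ concatenated with $Y\sim\M_b(\cdot|X^{\g_\al})$. On rejection, $X_{\g_\al}$ is deleted from $T$, the parent-level $\M_b$, $\M_s$ and $p_\al(X_{\g_\al-1})$ are replaced by the primed versions from~\eqref{e1}--\eqref{e3}, and the acceptance rates of every sibling chain sharing the prefix $X^{\g_\al-1}$ are recomputed by the clipping-minimum recursion. To close the induction I would first strengthen the inductive hypothesis so that it applies to any tree equipped with arbitrary per-chain initial acceptance rates $p_\al$ and arbitrary per-level draft/target distributions (not only those coming from an untouched valid sampling tree on the original $\M_s$); the modified tuple $(T\setminus\{X_{\g_\al}\},\M_b',\M_s',p'_\al)$ then satisfies the hypotheses of the strengthened statement and is strictly smaller, so induction applies.

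The hard step, and the only place where real calculation is required, is verifying the single algebraic identity that glues the accept and reject branches so the overall first-new-token marginal equals $\M_b(\cdot|X_0)$. By the tower property, this reduces to a level-by-level check that the ``effective target mass'' $p_\al(X_{\g_\al-1})\cdot\M_b(x|X^{\g_\al-1})$ splits into an accept-branch contribution of weight $p^{ini}_\al(X_{\g_\al})$ on $x=X_{\g_\al}$ plus a reject-branch residual whose shape is $\M_b'(\cdot|X^{\g_\al-1})$ and whose total weight is governed by $p'_\al(X_{\g_\al-1})$; this is exactly what \eqref{e1} and~\eqref{e3} enforce, generalizing the one-step residual construction $\text{norm}([\M_b-\M_s]_+)$ by the clipping factor $p_\al(X_{\g_\al-1})$. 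The main obstacle is bookkeeping rather than any single inequality: the same updated distributions at $X_{\g_\al-1}$ govern every surviving sibling subtree, so the sampling-without-replacement update~\eqref{e2} for $\M_s'$ must be consistent with the distribution induced on those siblings, and one must verify that successive applications of~\eqref{e3} after multiple rejections under the same parent compose correctly with the clipping-minimum recursion that defines $p^{ini}_\al$.
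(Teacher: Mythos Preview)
Your plan matches the paper's approach: induction exploiting the algorithm's self-similarity (you on $|T|$, the paper equivalently on the descendant count $|D(\al_\ell)|$ of nodes along the first chain), with the algebraic core being the mass-splitting identity encoded in \eqref{e1}--\eqref{e3}. The paper sharpens your unstated ``strengthened hypothesis'' into a concrete two-part lemma---first padding the output to fixed length $\g_{\max}$ with fresh $\M_b$-samples to obtain $Z^{\g_{\max}}$ (isolated as a preliminary equivalence for validity), then proving $\pr(Z^\ell=\al^\ell)=p^{ini}_\al(\al_\ell)$ and $\pr(Z^{\g_{\max}}=(\al^\ell,Z_{>\ell}))=p^{ini}_\al(\al_\ell)\,\M_b(Z_{>\ell}\mid\al^\ell)$ for every $\ell$ on the first chain---which is exactly the statement that survives passage to the modified tuple $(T\setminus\{X_{\g_\al}\},\M_b',\M_s',p'_\al)$ and thereby dissolves both your ``first-new-token marginal'' imprecision (the full joint is required, and the padding device supplies it) and your bookkeeping worry about iterated applications of \eqref{e3} under a common parent.
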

\begin{theorem}\label{thm:chain_op}
When the sampling tree reduces to one single chain, for any valid chain verification algorithm VERIFY in Definition \ref{def2}, let $N_{\rm traversal}$, $N_{\rm block}$ and $N_{\rm verify}$ be the number of accepted tokens in Traversal Verification, Block Verification \cite{blockV} and VERIFY, respectively,
then for any given distributions $\M_s,\M_b$ and draft chain $T$, we have
\begin{equation*}
    \E[N_{\rm traversal}] = \E[N_{\rm block}]\geqslant \E[N_{\rm verify}],
\end{equation*}
where $\E$ denotes the expectation taken over the randomness of draft chain $T$ and internal random variables utilized within the verification algorithms.
\end{theorem}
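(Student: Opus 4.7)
The plan is to split the statement into two subclaims: the equality $\E[N_{\mathrm{traversal}}] = \E[N_{\mathrm{block}}]$ and the optimality inequality $\E[N_{\mathrm{block}}] \geqslant \E[N_{\mathrm{verify}}]$. The second subclaim is exactly the optimality of Block Verification among all valid chain verification algorithms, established in \cite{blockV}; I would invoke it as a black box. The new work is therefore the equality between Traversal Verification and Block Verification on a single chain.

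To prove the equality, I would proceed by induction on the chain depth $\g$. In the base case $\g=1$, Algorithm \ref{alg:TV} reduces to a single Bernoulli trial with acceptance probability $\min\{\M_b(X_1|X_0)/\M_s(X_1|X_0),\,1\}$, which coincides with both Block Verification and the single-token verification of Algorithm \ref{alg:vanillaSD}, so the two algorithms are identical in distribution. For the inductive step, both algorithms first attempt to accept the entire length-$\g$ chain with the same clipped-product probability $p_\g^{\mathrm{ini}}$ defined in Line 1 of Algorithm \ref{alg:TV}. Conditional on this full-chain rejection, Traversal Verification deletes the leaf $X_\g$ and restarts on a truncated chain of depth $\g-1$ with the updates \eqref{e1}--\eqref{e3}. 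The crucial step is to identify the resulting residual subproblem $(\M_b',\M_s',p'_{\g-1})$ with the length-$(\g-1)$ subproblem that Block Verification recurses into after its own rejection step; once this identification is made, the inductive hypothesis yields equal conditional expected acceptance lengths after rejection, and combining with the shared full-chain acceptance probability closes the induction.

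The main technical obstacle is precisely this matching of residual subproblems, which reduces to checking that the normalizing constants in \eqref{e1} and \eqref{e3} compose to reproduce exactly the Bayesian posterior state on which Block Verification operates after a full-chain rejection, namely the correct re-weighting of $\M_b$ by the rejection event together with the correctly renormalized draft distribution. An arguably cleaner route that avoids a separate comparison to Block Verification is to bypass the inductive step entirely and compute
\begin{equation*}
\E[N_{\mathrm{traversal}}] \;=\; \sum_{k=1}^{\g}\pr[\tau_{\mathrm{traversal}}\geqslant k]
\end{equation*}
in closed form, using the losslessness of Traversal Verification (Theorem \ref{thm:TV_valid}) together with the update rules to obtain $\sum_{k=1}^\g p_k^{\mathrm{ini}}$, and then to match this against the analogous closed form established for Block Verification in \cite{blockV}. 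Either route ultimately hinges on the same bookkeeping of residual probability mass through the rejection chain, and this bookkeeping is the only genuinely new calculation required; the optimality inequality then follows by direct substitution into \cite[Theorem 1]{blockV}.
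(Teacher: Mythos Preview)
Your proposal is correct, and the paper follows your \emph{alternative} closed-form route rather than the inductive one you outline first. Concretely, the paper does not attempt to match the post-rejection residual subproblems of Traversal and Block Verification; instead it reads off $\pr(\tau_{\mathrm{traversal}}\geqslant\ell\mid\al^\ell)=p_\al^{ini}(\al_\ell)$ directly from Lemma~\ref{l2} (the key technical lemma underlying Theorem~\ref{thm:TV_valid}, not the losslessness statement itself), invokes the identical formula for Block Verification and the matching upper bound for arbitrary valid verifications as Lemmas~3 and~4 of \cite{blockV}, and then sums over $\ell$. The only nuance relative to your sketch is that the required input is the finer Lemma~\ref{l2} rather than Theorem~\ref{thm:TV_valid}: losslessness alone asserts that the output sequence is $\M_b$-distributed, which does not by itself pin down $\pr(\tau\geqslant\ell)$.

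Your primary inductive route is a genuinely different approach and would require more care than you indicate: after rejecting $X_\g$, Traversal Verification does \emph{not} restart as a fresh length-$(\g-1)$ instance on the original distributions, since the acceptance rate at $X_{\g-1}$ has been overwritten to $p'_\al(X_{\g-1})$ via \eqref{e3}, so the inductive hypothesis as you phrase it (equality on depth-$(\g-1)$ chains) does not apply directly. Making that route go through forces you to track the modified state through the recursion, which effectively re-derives the content of Lemma~\ref{l2}; at that point the closed-form route is strictly simpler.
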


{\bf Discussions on theoretical foundations and design motivation of Traversal Verification.} The core idea of proving the losslessness (Theorem \ref{thm:TV_valid}) of Traversal Verification lies in exploiting its self-similarity. The self-similarity of Traversal Verification implies that, for any parent node $A$ in the given sampling tree $T$, before determining the acceptance of $A$, all its descendant nodes have already been processed through the same traversal mechanism. In other words, every local subtree within the sampling tree $T$ essentially operates as a scaled-down instance of the Traversal Verification mechanism. Consequently, we can employ mathematical induction on the number of descendant nodes to establish the critical Lemma \ref{l2}, from which Theorem \ref{thm:TV_valid} (the lossless theorem) directly follows as a corollary.

For the single-chain optimality of Traversal Verification (Theorem \ref{thm:chain_op}), the key proof idea is to ensure that Traversal Verification achieves the highest possible acceptance probability at each node, aligning with Block Verification. Assume that the acceptance rate for a parent node $A$ is $P(A)$. As a bottom-up verification framework, the target probability distribution for child nodes of $A$ should be $P(A)\mathcal{M}_b$. By introducing a pseudo-child node with target probability $(1 - P(A))$, we can apply RRSw to transport the draft distribution $\mathcal{M}_s$ to the target distribution $P(A)\mathcal{M}_b$ combining with $(1 - P(A))$. We refer to the above process as the {\em sequence-level RRSw method}. Comprehensive details are provided in Appendix \ref{sec:sequence-level rrsw}. This motivation directly leads to the formulations \eqref{e1}--\eqref{e3} of Traversal Verification. Since Block Verification is an optimal valid chain verification algorithm \cite[Theorem 2]{blockV}, Traversal Verification inherits this optimality in the single-chain case (see Theorem \ref{thm:chain_op}).

\section{Experiments}
\label{sec:experiments}
\subsection{Experimental Setup}\label{sec:set_of_experiments}

\paragraph{Target LLMs and draft model.} We mainly conduct experiments on the Llama3 \cite{grattafiori2024llama3herdmodels} series, using Llama3.2-1B-Instruct as the draft model and Llama3.1-8B-Instruct as the target model. We also include Llama-68M \cite{DBLP:conf/asplos/MiaoOZCWZWZYSSC24} with Llama2-7b \cite{llama2} as the draft and target model, which is widely adopted in existing speculative decoding researches \cite{ChenSequoia, Asps, TowardsOptimal, SpecHub}. 

\paragraph{Tasks.} We perform experiments on the Spec-Bench dataset \cite{XiaSpecbench}, which includes 80 instances from each of six distinct domains: multi-turn conversation (MT-Bench \cite{mt-bench}),  translation (WMT14 DE-EN \cite{wmt14}), summarization (CNN/Daily Mail \cite{cnndm}), question answering (Natural Questions \cite{natural_questions}), Mathematical reasoning (GSM8K \cite{gsm8k}), retrieval-augmented generation (DPR \cite{DPR}).  

\paragraph{Metrics.} We evaluate the performance of our method using two metrics: acceptance length and token generation speed. Acceptance length is the number of tokens generated per drafting-verification cycle, which reflects the theoretical performance of the verification method. We also include the actual throughput for a comprehensive comparison. It is worth noting that there may be slight variations in acceptance length according to differences in statistical methods, and we provide detailed discussions and additional experimental results on this issue in Appendix \ref{sec:criteria}.

\paragraph{Implementation.} For token-level tree verification, we adopt the RRSw implementation in EAGLE \cite{LiEAGLE} from Spec-Bench \cite{XiaSpecbench} open source repository. All experiments are conducted on a single NVIDIA RTX A6000 GPU with PyTorch backend. Due to inherent randomness in sampling, we conduct three independent runs for each case and report the average as the result.

\paragraph{Measurement of Generation Quality.} Traversal Verification is theoretically a lossless speculative decoding technique, which suggests that evaluating its generation quality should not be mandatory. However, recognizing that some readers may seek assurance regarding this guarantee, we present the measurements of generation quality as a supporting reference for losslessness. Please consult Appendix \ref{gen_quality} for the detailed experimental findings.

\subsection{Overall Effectiveness}
We present the acceptance lengths and throughput of two combinations of draft and target model, namely Llama3.2-1B-Instruct with Llama3.1-8B-Instruct and Llama-68M with Llama2-7B in Table \ref{tab:main_table_llama3} and Table \ref{tab:main_table_llama2}. For chain and binary tree, we set the depth at 5, which is equal to the maximum depth of EAGLE sparse tree. Tok.V denotes token-level verification and Tra.V denotes Traversal Verification. The acceptance lengths are rounded to 2 decimal places, and we also provide the standard errors. $\Delta$ denotes the relative improvement of Traversal Verification over token-level verification. The baseline generation speed without speculative decoding for Llama3.1-8B-Instruct is 34.5\textsmaller[2]{$\pm$0.1} token/s and for Llama2-7B is 37.3\textsmaller[2]{$\pm$0.1} token/s, and the speedup ratio can be calculated accordingly.

As can be observed from the results, compared with token-level verification, Traversal Verification achieves an average improvement in acceptance length of 2.2\% to 5.7\% across different tasks, tree architectures, and combinations of draft and target models. The performance gains from Traversal Verification exhibit variability depending on the specific configurations of draft and target models.

Since Traversal Verification operates through a bottom-up verification mechanism across the entire tree, it potentially introduces additional computational overhead compared to token-level verification. Consequently, the actual throughput improvement is slightly lower than the improvement in acceptance length. This issue can be mitigated through more optimized implementation.

\begin{table}[t]
    \centering
    \setlength{\tabcolsep}{0.12cm}
    \caption{Acceptance length and throughput on Llama3.2-1B-Instruct with Llama3.1-8B-Instruct.}
    \begin{tabular}{c|ccc|ccc|ccc}
    \toprule
        \multicolumn{10}{c}{Llama3.2-1B-Instruct (draft) \& Llama3.1-8B-Instruct (target) \quad Temperature=1} \\ \midrule
        \multicolumn{1}{c}{~} & \multicolumn{3}{c}{Chain} &  \multicolumn{3}{c}{Binary Tree} & \multicolumn{3}{c}{EAGLE Sparse Tree} \\ \midrule
        Tasks & Tok.V & Tra.V & $\Delta$ & Tok.V & Tra.V& $\Delta$ & Tok.V & Tra.V & $\Delta$ \\ \hline
        Multi-turn & 3.95\tiny{$\pm$0.03} & 4.09\tiny{$\pm$0.03} & 3.5\% & 4.64\tiny{$\pm$0.05} & 4.76\tiny{$\pm$0.04} & 2.6\% & 4.53\tiny{$\pm$0.02} & 4.67\tiny{$\pm$0.02} & 3.1\% \\ 
        Translation & 3.50\tiny{$\pm$0.02} & 3.53\tiny{$\pm$0.04} & 1.0\% & 4.28\tiny{$\pm$0.02} & 4.43\tiny{$\pm$0.03} & 3.4\% & 4.16\tiny{$\pm$0.04} & 4.27\tiny{$\pm$0.03} & 2.6\% \\ 
        Sum. & 3.66\tiny{$\pm$0.02} & 3.76\tiny{$\pm$0.03} & 2.6\% & 4.51\tiny{$\pm$0.02} & 4.64\tiny{$\pm$0.02} & 2.7\% & 4.32\tiny{$\pm$0.03} & 4.46\tiny{$\pm$0.03} & 3.1\% \\ 
        QA & 3.51\tiny{$\pm$0.02} & 3.68\tiny{$\pm$0.03} & 4.7\% & 4.32\tiny{$\pm$0.05} & 4.40\tiny{$\pm$0.04} & 2.0\% & 4.19\tiny{$\pm$0.05} & 4.31\tiny{$\pm$0.06} & 2.9\% \\ 
        Math & 4.61\tiny{$\pm$0.05} & 4.70\tiny{$\pm$0.03} & 1.8\% & 5.37\tiny{$\pm$0.03} & 5.39\tiny{$\pm$0.05} & 0.4\% & 5.13\tiny{$\pm$0.01} & 5.21\tiny{$\pm$0.02} & 1.5\% \\ 
        RAG & 4.05\tiny{$\pm$0.04} & 4.17\tiny{$\pm$0.05} & 3.1\% & 4.63\tiny{$\pm$0.02} & 4.76\tiny{$\pm$0.06} & 2.8\% & 4.60\tiny{$\pm$0.03} & 4.68\tiny{$\pm$0.04} & 1.7\% \\ \midrule
        Avg. Accept. & 3.88\tiny{$\pm$0.02} & 3.99\tiny{$\pm$0.01} & 2.8\% & 4.63\tiny{$\pm$0.03} & 4.73\tiny{$\pm$0.01} & 2.2\% & 4.49\tiny{$\pm$0.02} & 4.60\tiny{$\pm$0.02} & 2.4\% \\ 
        Avg. Token/s & 51.2\tiny{$\pm$1.2} & 52.5\tiny{$\pm$1.1} & 2.5\% & 54.0\tiny{$\pm$0.6} & 54.9\tiny{$\pm$1.2} & 1.7\% & 57.3\tiny{$\pm$1.3} & 58.5\tiny{$\pm$0.8} & 2.1\% \\ 
        \bottomrule
    \end{tabular}

    \label{tab:main_table_llama3}
\end{table}

\begin{table}[t]
    \centering
    \setlength{\tabcolsep}{0.12cm}
    \caption{Acceptance length and throughput on Llama-68M with Llama2-7B.}
    \begin{tabular}{c|ccc|ccc|ccc}
    \toprule
        \multicolumn{10}{c}{Llama-68M (draft) \& Llama2-7B (target) \quad Temperature=1} \\ \midrule
        \multicolumn{1}{c}{~} & \multicolumn{3}{c}{Chain} &  \multicolumn{3}{c}{Binary Tree} & \multicolumn{3}{c}{EAGLE Sparse Tree} \\ \midrule
        Tasks & Tok.V & Tra.V & $\Delta$ & Tok.V & Tra.V& $\Delta$ & Tok.V & Tra.V & $\Delta$ \\ \hline
        Multi-turn & 2.05\tiny{$\pm$0.05} & 2.16\tiny{$\pm$0.03} & 5.5\% & 2.47\tiny{$\pm$0.01} & 2.59\tiny{$\pm$0.01} & 4.7\% & 2.55\tiny{$\pm$0.02} & 2.70\tiny{$\pm$0.02} & 5.6\% \\ 
        Translation & 1.97\tiny{$\pm$0.05} & 2.10\tiny{$\pm$0.05} & 6.3\% & 2.38\tiny{$\pm$0.01} & 2.43\tiny{$\pm$0.03} & 2.1\% & 2.49\tiny{$\pm$0.01} & 2.51\tiny{$\pm$0.03} & 0.9\% \\
        Sum. & 1.77\tiny{$\pm$0.04} & 1.86\tiny{$\pm$0.05} & 4.9\% & 2.14\tiny{$\pm$0.01} & 2.27\tiny{$\pm$0.03} & 5.8\% & 2.25\tiny{$\pm$0.02} & 2.36\tiny{$\pm$0.02} & 4.7\% \\ 
        QA & 2.07\tiny{$\pm$0.01} & 2.19\tiny{$\pm$0.02} & 5.6\% & 2.59\tiny{$\pm$0.05} & 2.71\tiny{$\pm$0.01} & 4.8\% & 2.63\tiny{$\pm$0.02} & 2.69\tiny{$\pm$0.02} & 2.2\% \\ 
        Math & 2.01\tiny{$\pm$0.05} & 2.15\tiny{$\pm$0.04} & 7.0\% & 2.49\tiny{$\pm$0.05} & 2.67\tiny{$\pm$0.06} & 7.0\% & 2.57\tiny{$\pm$0.02} & 2.72\tiny{$\pm$0.01} & 6.0\% \\ 
        RAG & 2.09\tiny{$\pm$0.05} & 2.19\tiny{$\pm$0.03} & 4.8\% & 2.56\tiny{$\pm$0.05} & 2.69\tiny{$\pm$0.05} & 5.0\% & 2.63\tiny{$\pm$0.02} & 2.71\tiny{$\pm$0.06} & 3.2\% \\ \midrule
        Avg. Accept. & 1.99\tiny{$\pm$0.01} & 2.10\tiny{$\pm$0.01} & 5.7\% & 2.44\tiny{$\pm$0.03} & 2.56\tiny{$\pm$0.01} & 4.9\% & 2.52\tiny{$\pm$0.01} & 2.62\tiny{$\pm$0.01} & 3.8\% \\ 
        Avg. Token/s & 58.0\tiny{$\pm$0.7} & 60.8\tiny{$\pm$0.8} & 4.8\% & 59.4\tiny{$\pm$0.8} & 61.6\tiny{$\pm$0.6} & 3.7\% & 69.1\tiny{$\pm$0.9} & 71.2\tiny{$\pm$1.0} & 3.0\% \\ 
        \bottomrule
    \end{tabular}
    \label{tab:main_table_llama2}
\end{table}

\subsection{Impact of Chain Depth and Tree Size}
Since Traversal Verification considers the joint probability of the entire sequence, it is intuitive that the performance improvement will become more pronounced as the tree size and depth increase. To illustrate these effects, we perform experiments across varying chain depths and tree sizes. Specifically, for chain decoding, we conduct experiments at depths of 2, 4, 6, and 8. For tree decoding, we employ binary trees from depths of 2 to 5 (corresponding to trees with $2^3$-1, $2^4$-1, $2^5$-1, and $2^6$-1 nodes, respectively).  

\begin{figure}[h]
    \centering
    \includegraphics[width=0.98\linewidth]{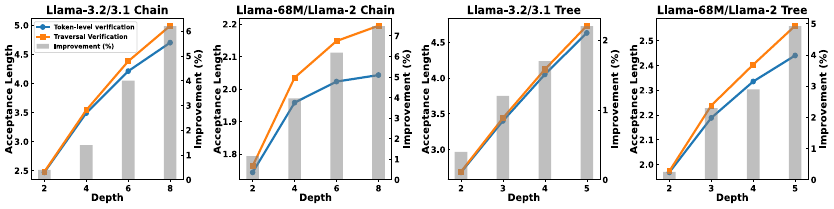}
    \caption{Acceptance lengths and improvements under different chain depths and tree sizes.}
    \label{fig:depth}
\end{figure}

As shown in Figure \ref{fig:depth}, the advantage of Traversal Verification grows progressively with increasing chain depth and tree size. In specialized scenarios (\emph{e.g.,} model offloading) where large tree sizes are permissible (for example, Sequoia \cite{ChenSequoia} utilizes trees with 768 nodes and depth exceeding 20), Traversal Verification is expected to demonstrate even greater performance gains.

\subsection{Impact of Temperature}
We investigate the impact of temperature on Traversal Verification. Intuitively, as the temperature decreases (\emph{i.e.,} the probability distribution becomes more concentrated), the performance gap between token-level verification and Traversal Verification narrows. Conversely, at higher temperatures, Traversal Verification demonstrates more pronounced advantages.

\begin{table}[h]
    \centering
    \setlength{\tabcolsep}{0.12cm}
    \caption{Acceptance lengths under different temperature.}
    \begin{tabular}{c|ccc|ccc|ccc}
    \toprule
        \multicolumn{1}{c}{~} & \multicolumn{3}{c}{Chain} &  \multicolumn{3}{c}{Binary Tree} & \multicolumn{3}{c}{EAGLE Sparse Tree} \\ \midrule
        Temp. & Tok.V & Tra.V & $\Delta$ & Tok.V & Tra.V& $\Delta$ & Tok.V & Tra.V & $\Delta$ \\ \hline
        0.2 & 4.16\tiny{$\pm$0.01} & 4.20\tiny{$\pm$0.01} & 1.0\% & 5.01\tiny{$\pm$0.02} & 5.07\tiny{$\pm$0.02} & 1.2\% & 4.77\tiny{$\pm$0.03} & 4.84\tiny{$\pm$0.01} & 1.5\% \\ 
        0.4 & 4.14\tiny{$\pm$0.02} & 4.20\tiny{$\pm$0.02} & 1.4\% & 5.00\tiny{$\pm$0.01} & 5.06\tiny{$\pm$0.01} & 1.2\% & 4.76\tiny{$\pm$0.02} & 4.83\tiny{$\pm$0.02} & 1.5\% \\ 
        0.6 & 4.11\tiny{$\pm$0.02} & 4.17\tiny{$\pm$0.03} & 1.5\% & 4.92\tiny{$\pm$0.03} & 5.00\tiny{$\pm$0.01} & 1.5\% & 4.71\tiny{$\pm$0.01} & 4.78\tiny{$\pm$0.01} & 1.5\% \\ 
        0.8 & 4.02\tiny{$\pm$0.02} & 4.11\tiny{$\pm$0.01} & 2.2\% & 4.81\tiny{$\pm$0.02} & 4.90\tiny{$\pm$0.02} & 1.7\% & 4.64\tiny{$\pm$0.02} & 4.72\tiny{$\pm$0.01} & 1.7\%\\ 
        1.0 & 3.88\tiny{$\pm$0.02} & 3.99\tiny{$\pm$0.01} & 2.8\% & 4.63\tiny{$\pm$0.03} & 4.73\tiny{$\pm$0.01} & 2.2\% & 4.49\tiny{$\pm$0.02} & 4.60\tiny{$\pm$0.02} & 2.4\%\\ \bottomrule
    \end{tabular}
    \label{tab:temperature}
\end{table}

Table \ref{tab:temperature} presents the acceptance length of Traversal Verification and token-level verification across different temperature settings, using Llama3.2-1B-Instruct and Llama3.1-8B-Instruct as the draft and target models, respectively. The depths of chain and binary tree are set to 5. The superiority of Traversal Verification increases with rising temperature, aligning with our intuitive expectations. It is worth noting that Llama2-7B may generate repeated tokens at lower temperatures, leading to unreliable acceptance length measurements; therefore, we omit the results for Llama2 in this analysis.

\section{Related work}
Significant efforts have been devoted to accelerating LLMs. Some approaches directly reduce memory access and computational costs through techniques such as quantization \cite{dettmers2022llmint88bitmatrixmultiplication, frantar2023gptqaccurateposttrainingquantization, DBLP:conf/icml/XiaoLSWDH23, DBLP:conf/mlsys/0002TTYCWXDG024} and knowledge distillation \cite{gu2024minillm, DBLP:conf/icml/KoKCY24, DBLP:conf/acl/Zhong00L0T24}. Some other works focus on architectural innovations, such as Mixture of Experts (MoE) \cite{mixtral,dsv3}, where only a subset of model parameters is activated during inference, thereby improving inference speed.

Speculative decoding \cite{DBLP:journals/corr/abs-2302-01318, DBLP:conf/icml/LeviathanKM23} introduces a distinct drafting-verification paradigm that leaves the LLM itself unchanged. Researches on speculative decoding primarily focus on two directions. 1) Better alignment between the draft and the target model, such as EAGLE \cite{LiEAGLE, LiEAGLE2} and Medusa \cite{Medusa} series. 2) Better verification strategies, such as innovations in tree structures \cite{LiEAGLE2, ChenSequoia, optree} and verification algorithms, which are more closely related to this work.  

In chain decoding scenarios, Block Verification \cite{blockV} and Asps \cite{Asps} identify the sub-optimality in token-level verification and propose enhancements. SpecTr \cite{SpecTr} extends chain decoding to multi-candidate settings by formulating it as an optimal transport problem solved via linear programming, while SpecInfer \cite{DBLP:conf/asplos/MiaoOZCWZWZYSSC24} employs Recursive Rejection Sampling for multi-candidate situations. Subsequent works refine this approach into RRSw (recursive rejection sampling without replacement) \cite{ChenSequoia, LiEAGLE, JeonRSD, YangMCSD}, preventing repeated sampling and rejection of identical tokens, thereby improving acceptance rates. Beyond standard sampling, SpecHub \cite{SpecHub} and Greedy Sampling \cite{TowardsOptimal} adopt hybrid strategies: deterministically selecting top-K candidates with the highest probability and sampling other candidates probabilistically, achieving higher acceptance rates in specific scenarios.  

\section{Conclusion}
This paper proposes Traversal Verification, a novel speculative decoding algorithm that significantly enhances the acceptance length, thereby improving the throughput of LLM inference. We rethink the limitations of existing token-level verification methods and adopt a bottom-up verification strategy that allows sequence-level acceptance and full utilization of drafted tokens. We theoretically prove the losslessness of Traversal Verification and its optimality when the decoding tree degenerates into a single chain. Experimental results show that Traversal Verification consistently improves the acceptance length and throughput of over existing speculative tree decoding methods across various tasks, tree structures, and combinations of draft and target models.

\begin{ack}
This project is fully funded by Lenovo. We would like to express special thanks to the Lenovo AI Lab and the Lenovo Model Factory Team for their valuable support in providing computing resources.
\end{ack}

\bibliographystyle{acl_natbib}  
\small
\bibliography{Reference}
\normalsize


\appendix

\newpage
\section{Formal Proof of Losslessness of Traversal Verification}
\label{formal proof}
We first prove a necessary and sufficient condition for a valid tree verification algorithm (Definition \ref{def2}). Our proof technique is analogous to \cite[Lemma 2 in Appendix.B]{blockV} and extends the original lemma to a tree-structured case.
\begin{lemma}\label{l1}
$\forall \M_s,\M_b$, let $T$ be a decoding tree rooted at $X_0$ base on $\M_s$, and $\g_{\max}:=\max_{v \subset T} \gamma_v$ be the maximum depth along all chains in $T$. The output of a tree verification
algorithm $\mathcal{A}_{\mathrm{ver}}$ is denoted as 
\begin{equation*}
    (X^\tau, Y) = \mathcal{A}_{\mathrm{ver}}(T, \mathcal{M}_s, \mathcal{M}_b).
\end{equation*}
Let $Z^{\g_{\max}}=(Z_0,Z_1,\dots,Z_{\g_{\max}})$ be a sequence defined as follows: 
\begin{equation*}
Z^{\g_{\max}}=\left\{
\begin{aligned}
&X^\tau, \quad &&\tau=\g_{\max}\\
&(X^\tau,Y,Z_{>\tau+1}), \quad &&\tau<\g_{\max}
\end{aligned}
\right.,
\end{equation*}
with $Z_{>\tau+1}:=(Z_{\tau+2},\dots,Z_{\g_{\max}})$ generated from $\M_b(\cdot|X^\tau,Y)$. Then the tree verification algorithm $\mathcal{A}_{\mathrm{ver}}$ is valid if and only if
\begin{equation}\label{e7-5}
Z^{\g_{\max}} \sim \M_b(\cdot|X_0).
\end{equation}
\end{lemma}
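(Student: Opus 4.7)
The plan is to prove the equivalence by reducing it to the chain rule of conditional probabilities for $\M_b$, applied to the canonical extension $Z^{\g_{\max}}$. The two directions are mirror images: one factorizes the target distribution along the accepted prefix and its independent $\M_b$-tail, the other marginalizes the tail to recover the prefix distribution. Throughout, $Z^{\g_{\max}}$ lives on a fixed-length sample space, which is what allows a variable-length random object $(X^\tau, Y)$ to be compared against $\M_b(\cdot|X_0)$ in a well-defined way.

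For the $(\Rightarrow)$ direction, I would condition on the event $\{\tau=t\}$ for each $0\leqs t\leqs\g_{\max}$. By the validity hypothesis, $(X^\tau,Y)$ restricted to this event is distributed as the length-$(t+2)$ prefix of a sample from $\M_b(\cdot|X_0)$ (with the convention that $Y$ is absent when $t=\g_{\max}$). Since $Z_{>\tau+1}\sim\M_b(\cdot|X^\tau,Y)$ is an independent $\M_b$-extension given that prefix, the chain rule
\[
\M_b(z_1,\dots,z_{\g_{\max}}|z_0)=\M_b(z_1,\dots,z_{t+1}|z_0)\,\M_b(z_{t+2},\dots,z_{\g_{\max}}|z_0,\dots,z_{t+1})
\]
immediately yields $Z^{\g_{\max}}\sim\M_b(\cdot|X_0)$ on $\{\tau=t\}$. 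Summing over all $t$ preserves the joint law, giving \eqref{e7-5}.

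For the $(\Leftarrow)$ direction, I would marginalize out $Z_{>\tau+1}$ from the assumed identity $Z^{\g_{\max}}\sim\M_b(\cdot|X_0)$. On each event $\{\tau=t\}$ the tail, given $(X^\tau,Y,\tau)$, is by construction an $\M_b(\cdot|X^\tau,Y)$-sample, so the same factorization shows that the marginal of the first $t+2$ coordinates of $Z^{\g_{\max}}$ restricted to $\{\tau=t\}$ equals the law of $(X^\tau,Y)$ restricted to $\{\tau=t\}$ times $\M_b(z_1,\dots,z_{t+1}|z_0)$ divided out. Matching this against the corresponding $\M_b$-marginal of $Z^{\g_{\max}}$ forces $(X^\tau,Y)$ on $\{\tau=t\}$ to have the correct conditional distribution, which is exactly the variable-length meaning of $(X^\tau,Y)\sim\M_b(\cdot|X_0)$ used in Definition \ref{def2}.

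The main obstacle is not computational but definitional: pinning down what "$(X^\tau,Y)\sim\M_b(\cdot|X_0)$" means when $\tau$ is random, so that the two sides of the equivalence live on compatible spaces. My plan is to interpret it slice-by-slice on each $\{\tau=t\}$ event, which is exactly the information encoded by the padded sequence $Z^{\g_{\max}}$; the chain rule then makes the two formulations interchangeable. This mirrors the approach of \cite[Lemma~2, Appendix~B]{blockV}, and the tree structure plays no essential role here, since $Z^{\g_{\max}}$ tracks only the single accepted chain from root to the selected leaf and the argument reduces chain-by-chain to the single-chain case.
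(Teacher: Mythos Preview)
Your slice-by-slice conditioning on $\{\tau=t\}$ is the weak point in both directions. In the $(\Rightarrow)$ direction you assert that, by validity, $(X^\tau,Y)$ restricted to $\{\tau=t\}$ is distributed as the length-$(t{+}2)$ prefix of a sample from $\M_b(\cdot|X_0)$. That is false in general: validity is an \emph{unconditional} statement, and conditioning on the random stopping index $\tau$ typically destroys it. Concretely, take a two-letter alphabet $\{a,b\}$ with $\M_b(\cdot|X_0)=(\tfrac12,\tfrac12)$, $\M_s(\cdot|X_0)=(1,0)$, and a depth-one chain. Vanilla speculative decoding is valid here, yet on $\{\tau=0\}$ (rejection of the always-drafted $a$) the resampled $Y$ equals $b$ deterministically, not a fair coin; only the \emph{mixture} over $\tau\in\{0,1\}$ recovers $\M_b$. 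The same obstruction breaks your $(\Leftarrow)$ argument: \eqref{e7-5} controls only the unconditional law of $Z^{\g_{\max}}$, so matching the $\{\tau=t\}$-restricted marginal of $Z^{\g_{\max}}$ against an $\M_b$-marginal cannot isolate the law of $(X^\tau,Y)$ on that slice---the slice decomposition is underdetermined.

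The paper takes a genuinely different route for the substantive direction (\eqref{e7-5} $\Rightarrow$ valid) that avoids conditioning on $\tau$ altogether. It reruns $\mathcal{A}_{\rm ver}$ with $(X^\tau,Y)$ as the new root, obtaining a second padded sequence $\w Z^{\g_{\max}}$; by \eqref{e7-5} applied to this second run, $\w Z^{\g_{\max}}$ conditional on $(X^\tau,Y)$ is an $\M_b$-continuation. Separately, extending the first-run $Z^{\g_{\max}}$ by fresh $\M_b$-samples $E^*$ to the same total length gives another sequence that is also $(X^\tau,Y)$ followed by an $\M_b$-continuation, and whose unconditional law is $\M_b(\cdot|X_0)$ by \eqref{e7-5} plus the chain rule. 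Since the two constructions share the same random prefix $(X^\tau,Y)$ and the same conditional suffix law, they have the same joint distribution; taking expectation over everything after $(X^\tau,Y)$ then yields $\pr(X^\tau,Y)=\M_b(X^\tau,Y|X_0)$. The second application of the algorithm is exactly the device that lets one integrate out a suffix of \emph{random} length without slicing on $\tau$. You correctly flag the definitional issue as the crux, but the resolution is not a slice-by-slice reading of validity---that interpretation is strictly stronger than what Definition~\ref{def2} asserts and than what $Z^{\g_{\max}}$ encodes.
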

\begin{proof}
    We first prove the sufficiency, i.e., Equation \eqref{e7-5} implies that $\mathcal{A}_{\mathrm{ver}}$ satisfies Definition \ref{def2}. 
    
    Taking the output $(X^\tau,Y)$ as a new prefix into $\mathcal{A}_{\rm ver}$, we obtain
    \begin{equation*}
    (\w{X}^{\w{\tau}},\w{Y}) = \mathcal{A}_{\mathrm{ver}}(\w{T},\M_s,\M_b),
    \end{equation*}
    with the root of $\w{T}$ being $\w{X}_0=(X^\tau,Y)$ and then generate \begin{equation*}
    \w{Z}^{\g_{\max}}=\left\{
    \begin{aligned}
    &\w{X}^{\w{\tau}}, \quad &&\w{\tau}=\g_{\max}\\
    &(\w{X}^{\w{\tau}},\w{Y}, \w{Z}_{>\w{\tau}+1}), \quad &&\w{\tau}<\g_{\max}
    \end{aligned}
    \right.
    \end{equation*}
    with  $\w{Z}_{>\w{\tau}+1}\sim\M_b(\cdot|\w{X}^{\w{\tau}},\w{Y})$.
    Note that by Equation \eqref{e7-5}, we have
    \begin{equation*}
    \w{Z}^{\g_{\max}}\sim \pr(X^\tau,Y)\M_b(\cdot|\w{X}_0),
    \end{equation*}
    and 
    \begin{equation*}
    Z^{\g_{\max}}, E^* \sim \M_b(\cdot|X_0),
    \end{equation*}
    Here, \( E^*\) is an extension sequence of \( Z^{\gamma_{\max}} \) generated from \( \mathcal{M}_b(E^*|Z^{\g_{\max}}) \), such that the combined sequence \( (Z^{\gamma_{\max}}, E^*) \) has the same number of tokens as \( \w{Z}^{\gamma_{\max}} \). For the sequences \( (Z^{\gamma_{\max}}, E^*) \) and \( \widetilde{Z}^{\gamma_{\max}} \), by taking the expectation over all the random variables after \( (X^\tau, Y) \), we get 
    \begin{equation*}
    \pr(X^\tau,Y)=\M_b(X^\tau,Y|X_0),
    \end{equation*}
    namely, the proof of the sufficiency is completed.

    The necessity is straightforward: If $\tau<\g_{\max}$, Equation \eqref{e7-5} holds trivially. If $\tau=\g_{\max}$, then Z$^{\g_{\max}}=X^\tau$ and $Y\sim\M_b(\cdot|X^\tau)$. By \eqref{e7-1} in Definition \ref{def2}, we also have \begin{equation*}
    Z^{\g_{\max}}\sim\M_b(\cdot|X_0).
    \end{equation*}
    In conclusion, the proof of the necessity is also completed.
\end{proof}

\subsection{Proof of Theorem \ref{thm:TV_valid}}
By Lemma \ref{l1}, it would be enough to prove that Traversal Verification satisfies Equation \eqref{e7-5}. We observe the inherent {\em self-similar} property of Traversal Verification: When arbitrarily selecting a parent node and rejecting it, the algorithm has already evaluated all its descendant nodes through the same traversal mechanism. In other words, Traversal Verification effectively applies a recursive instance of itself to the local subtree rooted at the current parent node. Leveraging this self-similar property, we establish the following stronger lemma than Theorem \ref{thm:TV_valid}.

\begin{lemma}\label{l2}
$\forall \M_s,\M_b$, let $T$ be a decoding tree rooted at $X_0$ base on $\M_s$ and $\g_{\max}:=\max_{v \subset T} \gamma_v$ be the maximum depth along all chains in $T$. The first chain in $T$ is denoted as $\alpha = (\al_0, \al_1, \dots, \al_{\gamma_\alpha})$ from root $\al_0=X_0$ to the first leaf node $\al_{\gamma_\alpha}$. $Z^{\g_{\max}}$ is the sequence generated by Traversal Verification $\mathcal{A}_{\rm tra}(T,\M_s,\M_b)$ (i.e., Algorithm \ref{alg:TV}) in Lemma \ref{l1}. Then the following statements hold, $\forall 0\leqs\ell\leqs\g_\al$, 
\begin{equation}\label{e7-9}
\mathbb{P}(Z^{\ell}=\al^\ell) = p^{ini}_\al(\al_\ell) \quad \text{and} \quad \mathbb{P}(Z^{\g_{\max}}=(\al^\ell,Z_{>\ell} ))
=p^{ini}_\al(\al_\ell)\M_b(Z_{>\ell}|\al^\ell).
\end{equation}
\end{lemma}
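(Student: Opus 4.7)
The plan is to prove Lemma \ref{l2} by induction on the number of nodes of the sampling tree $T$, using the self-similarity property of Traversal Verification highlighted by the authors. The base case $|T|=1$ is trivial: with $\g_\al=0$ and $p^{ini}_\al(\al_0)=1$, the algorithm immediately accepts $X_0$, and the construction of $Z^{\g_{\max}}$ in Lemma~\ref{l1} draws $Z_{>0}\sim\M_b(\cdot|X_0)$ autoregressively, so both displayed identities hold.

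For the inductive step I condition on the outcome of the very first iteration of Algorithm~\ref{alg:TV}, which tests the leaf $\al_{\g_\al}$ of the first chain. Writing $p:=p^{ini}_\al(\al_{\g_\al})$, on the acceptance event (probability $p$) the algorithm terminates with $\tau=\g_\al$ and $X^\tau=\al$, and the construction in Lemma~\ref{l1} appends $Y\sim\M_b(\cdot|\al)$ followed by an autoregressive $\M_b$-tail; this branch supplies the lemma's identities at $\ell=\g_\al$ directly, and contributes the term $p\cdot\M_b(Z_{>\ell}|\al^\ell)$ to $\pr(Z^{\g_{\max}}=(\al^\ell,Z_{>\ell}))$ for each $\ell<\g_\al$. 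On the rejection event (probability $1-p$) Lines~10--13 collapse the algorithm's state to $(T',\M_s',\M_b',\{p'\})$ on the smaller tree $T'=T\setminus\{\al_{\g_\al}\}$ with $|T'|=|T|-1$, after which the algorithm continues. The self-similarity observation is that this continuation is itself a genuine instance of Traversal Verification on $T'$, so the inductive hypothesis applies---provided it is stated in the slightly strengthened form that admits arbitrary current leaf-acceptance rates, not only those produced by Line~1 applied to the original distributions. Compatibility with \eqref{e1}--\eqref{e3} is exactly what the modifications in Lines~10--13 preserve, which is the precise content of self-similarity.

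Combining the two branches then reduces the lemma to two short algebraic identities linking \eqref{e1}--\eqref{e3} to $p^{ini}$. The first is a probability identity relating $p$, the updated rate $p'_\al(\al_{\g_\al-1})$ from \eqref{e3}, and $p^{ini}_\al(\al_{\g_\al-1})$, which after unwinding \eqref{e3} reduces to the classical rejection-sampling relation $\sum_x\min\{p^{ini}_\al(\al_{\g_\al-1})\,\M_b(x|\al^{\g_\al-1}),\,\M_s(x|\al^{\g_\al-1})\}=p^{ini}_\al(\al_{\g_\al-1})-S$ built into the formula for $p'_\al$. The second is a mixing identity stating that the autoregressive $\M_b$-tail produced in the acceptance branch, combined in the correct proportion with the modified-residual $\M_b'$-tail given by \eqref{e1} of the rejection branch, reassembles the original $\M_b(\cdot|\al^{\g_\al-1})$; this is precisely how \eqref{e1} is engineered.

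The main obstacle is the bookkeeping required for $\ell<\g_\al$. On the rejection branch, $\{Z^\ell=\al^\ell\}$ receives contributions not only from the subcase in which the continuation accepts $\al_{\g_\al-1}$, but also from all deeper rejection subcases, where the match with $\al^\ell$ is supplied by the sampled $Y$ and the autoregressive tail in Lemma~\ref{l1}. The strengthened inductive hypothesis is what allows every such subcase to be absorbed into a single application of the lemma to $T'$; verifying that the formulas \eqref{e1}--\eqref{e3} are tuned so that these contributions telescope across successive rejections into the correct $p^{ini}_\al(\al_\ell)$ for every $\ell$ is where the arithmetic concentrates and is the delicate step of the proof.
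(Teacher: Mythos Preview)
Your plan differs from the paper's in one structural choice that turns out to matter. The paper does \emph{not} induct on $|T|$ and condition on the very first algorithmic step; it fixes $\ell$ and inducts on $|D(\al_\ell)|$, conditioning on the \emph{composite} event $\{Z_{\ell+1}=\al_{\ell+1}\}$ versus $\{Z_{\ell+1}\neq\al_{\ell+1}\}$. That event already absorbs the entire processing of the subtree rooted at $\al_{\ell+1}$, so that on the complementary event the only modification to the algorithm's state sits exactly at level $\ell$ (rate $p'_\al(\al_\ell)$, distributions $\M_b',\M_s'$ at $\al^\ell$). Self-similarity then lets the induction hypothesis be invoked on the subtree rooted at $\al_\ell$, which has strictly fewer descendants.

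Your decomposition at the leaf $\al_{\g_\al}$ does not localize in this way, and this is where your plan has a genuine gap. After the first rejection the state carries $p_\al(\al_{\g_\al-1})=p'$ while $p_\al(\al_i)=p^{ini}_\al(\al_i)$ for $i<\g_\al-1$; these values are \emph{incompatible} with the Line~1 recursion for any choice of $(\hat\M_s,\hat\M_b)$ that keeps the levels $\leqs\g_\al-2$ unchanged (and those levels must be kept unchanged, since they govern what happens once $\al_{\g_\al-1}$ is eventually rejected). Hence the continuation is not ``a genuine instance of Traversal Verification on $T'$'' in the sense the lemma requires, and the inductive hypothesis cannot be applied to $T'$ in one shot. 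Concretely, if one reads your ``strengthened form'' as asserting $\pr(Z'^{\ell}=\al^\ell\mid\text{reject})=\text{current }p_\al(\al_\ell)=p^{ini}_\al(\al_\ell)$ for $\ell<\g_\al-1$, then combining with the acceptance branch gives
\[
\pr(Z^\ell=\al^\ell\mid\al)=p+(1-p)\,p^{ini}_\al(\al_\ell),
\]
which equals $p^{ini}_\al(\al_\ell)$ only when $p=0$ or $p^{ini}_\al(\al_\ell)=1$. Your two ``short algebraic identities'' correctly handle the level $\ell=\g_\al-1$, but they do not propagate to smaller $\ell$; getting there forces you to unroll rejections all the way down to level $\ell+1$, which is precisely the paper's decomposition. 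A smaller point: on the acceptance branch the contribution to $\pr(Z^{\g_{\max}}=(\al^\ell,z_{>\ell}))$ is $p\cdot\mathbb{1}[z_{\ell+1:\g_\al}=\al_{\ell+1:\g_\al}]\cdot\M_b(z_{>\g_\al}\mid\al^{\g_\al})$, not $p\cdot\M_b(z_{>\ell}\mid\al^\ell)$; the latter only emerges after marginalizing over $\al_{\ell+1:\g_\al}$, which you do not flag.
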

\begin{proof}
When \(\gamma_\alpha = 0\), i.e., the tree \(T\) contains only the root node $X_0$, then $\g_{\max}=0$, $Z^{\g_{\max}}=X_0$ and the conclusion \eqref{e7-9} holds trivially. Therefore, in subsequent proofs, we only need to consider the case where \(\gamma_\alpha\geqslant 1\).

Next, we begin to prove the statements in \eqref{e7-9} hold for any fixed $0 \leqs \ell \leqs \gamma_\alpha$ by induction on the number of descendant nodes of $\al_\ell$. For simplicity, , we collect all the children nodes of $\al_\ell$ as a new set $C(\al_\ell)\subset T$ and all the descendant nodes of $\al_\ell$ as $D(\al_\ell)\subset T$. 
	
When the number $|D(\al_\ell)|=0$, i.e., $\al_\ell$ is the leave node of the first chain $\al$, then $\ell=\g_{\al}$ and $Z^{\g_{\al}}=\al^{\g_{\al}}$ means that the traversal algorithm $\mathcal{A}_{\rm tra}$ accepts the first chain $\al$ directly. Thus,
\begin{equation*}
\mathbb{P}(Z^{\g_{\al}}=\al^{\g_{\al}}) = p^{ini}_\al(\al_{\g_{\al}}) \quad \text{and} \quad \mathbb{P}(Z^{\g_{\max}}=(\al^{\g_{\al}},Z_{>{\g_{\al}}} ))
=p^{ini}_\al(\al_{\g_{\al}})\M_b(Z_{>{\g_{\al}}}|\al^\ell).
\end{equation*}
	
Suppose the two equations in \eqref{e7-9} hold when $|D(\al_\ell)|\leqs k$. Then when $|D(\al_\ell)|=k+1$, we know $D(\al_\ell)$ is nonempty since the node $\al_{\ell+1}\in D(\al_\ell)$. Trivially, we have $|D(\al_{\ell+1})|\leqs k$, then by the induction hypothesis,
\begin{align}
    \mathbb{P}(Z^{\ell+1}=\al^{\ell+1}) &= p^{ini}_\al(\al_{\ell+1}) \label{e7-10}\\ \mathbb{P}(Z^{\g_{\max}}=(\al^{\ell+1},Z_{>\ell+1}))
    &=p^{ini}_\al(\al_{\ell+1})\M_b(Z_{>{\ell+1}}|\al^{\ell+1}). \label{e7-11}
\end{align}
For Traversal Verification $\mathcal{A}_{\rm tra}(T,\M_s,\M_b)$, the probability
\begin{align}
    \mathbb{P}(Z^{\ell}=\al^{\ell})
    &=\mathbb{P}(Z_{\ell+1}=\al_{\ell+1},Z^{\ell}=\al^{\ell}) + \mathbb{P}(Z_{\ell+1}\neq\al_{\ell+1},Z^{\ell}=\al^{\ell}) \notag\\
    &=p^{ini}_\al(\al_{\ell+1}) + \mathbb{P}(Z_{\ell+1}\neq\al_{\ell+1}) \cdot \mathbb{P}(Z^{\ell}=\al^{\ell}|Z_{\ell+1}\neq\al_{\ell+1}) \notag\\
    &=p^{ini}_\al(\al_{\ell+1}) + (1-p^{ini}_\al(\al_{\ell+1})) \cdot \mathbb{P}(Z^{\ell}=\al^{\ell}|Z_{\ell+1}\neq\al_{\ell+1})\label{e7-12}
\end{align}
In the case of $Z_{\ell+1}\neq\al_{\ell+1}$, namely, all the nodes in $D(\al_{\ell+1}) \cup \{\al_{\ell+1}\}$ have been removed from the original tree $T$, the remaining tree $T_{\rm new}:=T-D(\al_{\ell+1}) - \{\al_{\ell+1}\}$ modifies only the following parameters compared to the original tree:
\begin{itemize}
    \item the acceptance rate $p'_\al(\al_{\ell})$: 
    \begin{equation}\label{e7-13}
    p'_\al(\al_{\ell}) =  \frac{\sum_{x}[p^{ini}_\al(\al_{\ell})\cdot\M_b(x|\al^{\ell})-\M_s(x|\al^{\ell})]_+}{\sum_{x}[p^{ini}_\al(\al_{\ell})\cdot\M_b(x|\al^{\ell})-\M_s(x|\al^{\ell})]_+ + 1-p^{ini}_\al(\al_\ell)}.
\end{equation}
    \item the distributions $\M_b'(x|\al^{\ell})$ and $\M_s'(x|\al^{\ell})$ for all {\em children nodes} of $\al_{\ell}$:
\begin{equation}\label{e7-14}
    \M'_b(x|\al^{\ell})= \frac{[p^{ini}_\al(\al_{\ell})\cdot\M_b(x|\al^{\ell})-\M_s(x|\al^{\ell})]_+}{\sum_{x}[p^{ini}_\al(\al_{\ell})\cdot\M_b(x|\al^{\ell})-\M_s(x|\al^{\ell})]_+},\quad \forall x\in\X,
\end{equation}
\begin{equation}\label{e7-15}
    \M'_s(\al_{\ell+1}|\al^{\ell})=0 {\rm~and~}\M'_s(x|\al^{\ell}) = \frac{\M_s(x|\al^{\ell})}{1-\M_s(\al_{\ell+1}|\al^{\ell})} \quad \forall x\neq \al_{\ell+1}.
\end{equation}
\end{itemize}	
Therefore, after $\al_{\ell+1}$ has been rejected, the acceptance rate of the parent node $\al_{\ell}$ decreases from $p_\al^{ini}(\al_\ell)$ to $p_\al'(\al_\ell)$, and the remaining children nodes of $\al_{\ell}$ in $T_{\rm new}$ are stochastic sampling nodes on $\M'_s(\cdot|\al^{\ell})$, with their corresponding target distributions being $\M'_b(\cdot|\al^{\ell})$. By the {\em self-similar} property of $\mathcal{A}_{\rm tra}$, we observe that in the remaining tree $T_{\rm new}$, Traversal Verification utilizes only the acceptance probability $p'(\alpha_\ell)$ of parent node $\alpha_\ell$, the new distributions $\M'_s(\cdot|\al^{\ell})$, $\M'_b(\cdot|\al^{\ell})$ of children nodes $C(\al_\ell)$, and the original distributions $\M_s(\cdot|\al^{\ell})$ and $\M_b(\cdot|\al^{\ell})$ of other descendant nodes $D(\al_\ell)-C(\al_\ell)$.
Since $\al_{\ell+1}\notin T_{\rm new}$, the number of descendant nodes of $\al_{\ell}$ in new tree $T_{\rm new}$ is less than the original $|D(\al_\ell)|$, by the induction hypothesis, we know
\begin{align}
    \mathbb{P}(Z^{\ell}=\al^{\ell}|Z_{\ell+1}\neq\al_{\ell+1}) 
    &= \mathbb{P}(\text{accept }\al_{\ell}|\text{reject }\al_{\ell+1}) \notag\\
    &= \mathbb{P}(\text{accept }\al_{\ell} \text{ in }T_{\rm new})=p'_\al(\al_{\ell}), \label{e7-16}\\
    \mathbb{P}(Z^{\g_{\max}}=(\al^\ell,Z_{>\ell})|Z_{\ell+1}\neq\al_{\ell+1})
    &=\mathbb{P}(Z^{\g_{\max}}=(\al^\ell,Z_{>\ell})|T_{\rm new}) \notag\\ &=p'_\al(\al_{\ell})\M'_b(Z_{\ell+1}|\al^{\ell})\M_b(Z_{>\ell+1}|\al^\ell,Z_{\ell+1}).\label{e7-17}
\end{align}
	
Now, we begin to prove $\mathbb{P}(Z^{\ell}=\al^{\ell}) = p^{ini}_\al(\al_{\ell})$ at first.
\begin{align}
    \mathbb{P}(Z^{\ell}=\al^{\ell})
    &\overset{\eqref{e7-12}}{=}
    p^{ini}_\al(\al_{\ell+1}) + (1-p^{ini}_\al(\al_{\ell+1})) \cdot \mathbb{P}(Z^{\ell}=\al^{\ell}|Z_{\ell+1}\neq\al_{\ell+1}) \notag\\
    &\overset{\eqref{e7-16}}{=}
    p^{ini}_\al(\al_{\ell+1}) + (1-p^{ini}_\al(\al_{\ell+1})) \cdot p'_\al(\al_{\ell}).\label{e7-18}
\end{align}
Since $\mathbb{P}(Z^{\ell}=\al^{\ell})$ is independent to the random variable $\al_{\ell+1}$, we have
\begin{align*}
    &\mathbb{P}(Z^{\ell}=\al^{\ell})\\
    =~&\mathbb{E}_{\al_{\ell+1}}[\mathbb{P}(Z^{\ell}=\al^{\ell})]\\
    =~&\sum_x p^{ini}_\al(\al_{\ell+1}=x)\M_s(x|\al^{\ell}) +  \left[1-\sum_x p^{ini}_\al(\al_{\ell+1}=x)\M_s(x|\al^{\ell})\right] \cdot p'_\al(\al_{\ell}) \\
    =~&\sum_x\min\left\{p^{ini}_\al(\al_{\ell})\M_b(x|\al^\ell),\M_s(x|\al^\ell)\right\} + p'_\al(\al_{\ell}) \sum_{x}\left[\M_s(x|\al^{\ell}) - p^{ini}_\al(\al_{\ell})\M_b(x|\al^{\ell})\right]_+\\
    \overset{\eqref{e7-13}}{=}
    &\sum_x\min\left\{p^{ini}_\al(\al_{\ell})\M_b(x|\al^\ell),\M_s(x|\al^\ell)\right\} + \sum_{x}\left[p^{ini}_\al(\al_{\ell})\M_b(x|\al^{\ell})-\M_s(x|\al^{\ell})\right]_+\\
    =~&\sum_x \left[p^{ini}_\al(\al_{\ell})\M_b(x|\al^\ell)+\M_s(x|\al^\ell)\right] - \sum_x \M_s(x|\al^\ell)\\
    =~& p^{ini}_\al(\al_{\ell}).
\end{align*}
Then we begin to prove the second statement of \eqref{e7-9}, i.e., \begin{equation}\label{e7-20}
    \mathbb{P}(Z^{\g_{\max}}=(\al^\ell,Z_{>\ell}))
    =p^{ini}_\al(\al_\ell)\M_b(Z_{>\ell}|\al^\ell).
\end{equation}
For any sequence $x^{\g_{\max}}$ satisfying $x^{\ell}=\al^\ell$, we have
\begin{align*}
    &\mathbb{P}(Z^{\g_{\max}}=x^{\g_{\max}})\\
    =~& \mathbb{P}(\al_{\ell+1}=x_{\ell+1})\mathbb{P}(Z^{\g_{\max}}=(\al^{\ell+1},x_{>\ell+1})|\al_{\ell+1}=x_{\ell+1})\\
    &+ \sum_{x'\neq x_{\ell+1}} \mathbb{P}(\al_{\ell+1}=x')\mathbb{P}({\rm reject~}\al_{\ell+1})
    \mathbb{P}(Z^{\g_{\max}}=(\al^\ell,x_{>\ell})|{\rm reject~}\al_{\ell+1})\\
    \overset{\eqref{e7-11}}{=}
    &\M_s(x_{\ell+1}|\al^\ell)\cdot p^{ini}_\al(\al_{\ell+1}=x_{\ell+1})\M_b(x_{>{\ell+1}}|\al^{\ell},x_{\ell+1})\\
    &+ \sum_{x'\neq x_{\ell+1}}
    \M_s(x'|\al^\ell)[1-p^{ini}_\al(\al_{\ell+1}=x')]\mathbb{P}(Z^{\g_{\max}}=(\al^{\ell},x_{>\ell})|x_{\ell+1}\neq\al_{\ell+1})\\
    \overset{\eqref{e7-17}}{=}
    &\M_s(x_{\ell+1}|\al^\ell)\cdot p^{ini}_\al(\al_{\ell+1}=x_{\ell+1})\M_b(x_{>{\ell+1}}|\al^\ell,x_{\ell+1})\\
    &+ p'_\al(\al_{\ell})\M'_b(x_{\ell+1}|\al^{\ell})\M_b(x_{>\ell+1}|\al^\ell,x_{\ell+1})
    \sum_{x'\neq x_{\ell+1}}\M_s(x'|\al^\ell)[1-p^{ini}_\al(\al_{\ell+1}=x')]\\
    =~&\M_b(x_{>{\ell+1}}|\al^\ell,x_{\ell+1})\cdot \min\left\{p^{ini}_\al(\al_{\ell})\M_b(x_{\ell+1}|\al^\ell),\M_s(x_{\ell+1}|\al^\ell)  \right\}\\
    &+ \M_b(x_{>\ell+1}|\al^\ell,x_{\ell+1})\cdot p'_\al(\al_{\ell})\M'_b(x_{\ell+1}|\al^{\ell})
    \sum_{x'\neq x_{\ell+1}} \left[\M_s(x'|\al^\ell)-p^{ini}_\al(\al_{\ell})\M_b(x'|\al^\ell)\right]_+ \tag{$*$} \label{e7-19}
\end{align*}
We evaluate the value of Equation \eqref{e7-19} via case analysis of \( x_{\ell+1} \): 
\begin{itemize}
    \item Case 1: $p^{ini}_\al(\al_{\ell})\M_b(x_{\ell+1}|\al^\ell) \leqs \M_s(x_{\ell+1}|\al^\ell)$.
    
    Since $\M'_b(x_{\ell+1}|\al^{\ell})=0$ in this case (see \eqref{e7-14}), the Equation \eqref{e7-19} is equal to
    \begin{equation*}
    \eqref{e7-19}=\M_b(x_{>{\ell+1}}|\al^\ell,x_{\ell+1})\cdot p^{ini}_\al(\al_{\ell})\M_b(x_{\ell+1}|\al^\ell) = p^{ini}_\al(\al_{\ell})\M_b(x_{>{\ell}}|\al^\ell). \end{equation*}
    \item Case 2: $p^{ini}_\al(\al_{\ell})\M_b(x_{\ell+1}|\al^\ell) > \M_s(x_{\ell+1}|\al^\ell)$.
		
    Since $[\M_s(x_{\ell+1}|\al^\ell)-p^{ini}_\al(\al_{\ell})\M_b(x_{\ell+1}|\al^\ell)]_+=0$, we have
    \begin{align*}
        &\sum_{x'\neq x_{\ell+1}} \left[\M_s(x'|\al^\ell)-p^{ini}_\al(\al_{\ell})\M_b(x'|\al^\ell)\right]_+ \\
        = &\sum_{x'} \left[\M_s(x'|\al^\ell)-p^{ini}_\al(\al_{\ell})\M_b(x'|\al^\ell)\right]_+.
    \end{align*}
Together with \eqref{e7-13} and \eqref{e7-14}, we get
\begin{align*}
    &p'_\al(\al_{\ell})\M'_b(x_{\ell+1}|\al^{\ell})\sum_{x'\neq x_{\ell+1}} \left[\M_s(x'|\al^\ell)-p^{ini}_\al(\al_{\ell})\M_b(x'|\al^\ell)\right]_+ \\
    =~&p'_\al(\al_{\ell})\M'_b(x_{\ell+1}|\al^{\ell})\left\{\sum_{x'} \left[p^{ini}_\al(\al_{\ell})\M_b(x'|\al^\ell)-\M_s(x'|\al^\ell)\right]_+ + 1-p^{ini}_\al(\al_\ell)\right\}\\
    =~& [p^{ini}_\al(\al_{\ell})\M_b(x_{\ell+1}|\al^{\ell})-\M_s(x_{\ell+1}|\al^{\ell})]_+ \\
    =~& p^{ini}_\al(\al_{\ell})\M_b(x_{\ell+1}|\al^{\ell})-\M_s(x_{\ell+1}|\al^{\ell}).
\end{align*}
Taking it into \eqref{e7-19}, then
\begin{align*}
    \eqref{e7-19}&=\M_b(x_{>\ell+1}|\al^\ell,x_{\ell+1}) \left(\M_s(x_{\ell+1}|\al^\ell) + p^{ini}_\al(\al_{\ell})\M_b(x_{\ell+1}|\al^{\ell})-\M_s(x_{\ell+1}|\al^{\ell})\right)\\
    &=\M_b(x_{>{\ell+1}}|\al^\ell,x_{\ell+1})p^{ini}_\al(\al_{\ell})\M_b(x_{\ell+1}|\al^\ell)=p^{ini}_\al(\al_{\ell})\M_b(x_{>{\ell}}|\al^\ell).
\end{align*}
\end{itemize}
In conclusion, $\eqref{e7-19}=p^{ini}_\al(\al_{\ell})\M_b(x_{>{\ell}}|\al^\ell)$, i.e.,
\begin{equation*}
\mathbb{P}(Z^{\g_{\max}}=x^{\g_{\max}}) = p^{ini}_\al(\al_{\ell})\M_b(x_{>{\ell}}|\al^\ell), \quad \forall x^{\g_{\max}} = (\al^\ell,x_{>\ell}).
\end{equation*}
Thus, 
the Equation \eqref{e7-20} holds and we have proven by mathematical induction that, in Traversal Verification $\mathcal{A}_{\rm tra}$, for any node \(\alpha_\ell\) in the initial first chain, the following equations hold:  
\begin{equation*}
\mathbb{P}(Z^{\ell}=\al^\ell) = p^{ini}_\al(\al_\ell) \quad \text{and} \quad \mathbb{P}(Z^{\g_{\max}}=(\al^\ell,Z_{>\ell} ))
=p^{ini}_\al(\al_\ell)\M_b(Z_{>\ell}|\al^\ell).
\end{equation*}
\end{proof}

Theorem \ref{thm:TV_valid} can be directly deduced from this lemma. Specifically, by setting $\ell=0$ in Lemma \ref{l2}, we immediately obtain that
\begin{equation*}
\mathbb{P}(Z^{\g_{\max}}=(X_0,Z_{>0}))
=\M_b(Z_{>0}|X_0).
\end{equation*}
Since $\M_b(X_0|X_0)=1$, we know
\begin{equation*}
Z^{\g_{\max}} \sim \M_b(\cdot|X_0).
\end{equation*}
Therefore, the proof of and Theorem \ref{thm:TV_valid} has been completed.

\section{Formal Proof of Single-chain Optimality}

\label{optimal proof}
To establish the optimality of Traversal Verification in the single-chain case, we need to introduce two lemmas presented in \cite{blockV}.
\begin{lemma}[Lemma 3 in \cite{blockV}]\label{l3}
    Let $T=(\al_0,\dots,\al_\g)$ be a decoding chain based on $\M_s$, $\mathcal{A}_{block}$ be Block Verification proposed in \cite{blockV}, and
    \begin{equation*}
    (X^\tau, Y) = \mathcal{A}_{\mathrm{block}}(T, \mathcal{M}_s, \mathcal{M}_b).
    \end{equation*}
    Then we have $\forall \ell\leqs \g$,
    \begin{equation*}
    \pr(\tau\geqslant\ell|X^\ell=\al^\ell) = p_\al^{ini}(\al_\ell).
    \end{equation*}
\end{lemma}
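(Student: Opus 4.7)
The plan is to establish the identity by induction on the chain length $\ell$, leveraging the step-wise structure that Block Verification necessarily inherits from its recursive design. The base case $\ell=0$ holds trivially, since $\pr(\tau \geq 0 \mid \al^0 = x^0) = 1$ and the recursion sets $p_x^{ini}(x_0) = 1$ at the root. The deterministic-conditioning form of the identity immediately implies the random-variable form $\pr(\tau \geq \ell \mid \al^\ell) = p_\al^{ini}(\al_\ell)$, so it suffices to focus on the first.

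For the inductive step, I would decompose using the chain rule of conditional probability:
\begin{equation*}
\pr(\tau \geq \ell \mid \al^\ell = x^\ell) = \pr(\tau \geq \ell - 1 \mid \al^\ell = x^\ell) \cdot \pr(\tau \geq \ell \mid \tau \geq \ell - 1, \, \al^\ell = x^\ell).
\end{equation*}
The first factor simplifies to $\pr(\tau \geq \ell - 1 \mid \al^{\ell-1} = x^{\ell-1})$ because the event $\{\tau \geq \ell - 1\}$ depends only on the prefix $\al^{\ell-1}$ together with the internal randomness of Block Verification up to step $\ell-1$, neither of which involves $\al_\ell$; by the induction hypothesis this factor equals $p_x^{ini}(x_{\ell-1})$. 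It then remains to show that Block Verification's per-step acceptance rate, conditional on having already accepted the prefix $x^{\ell-1}$ and on seeing the candidate $x_\ell$, equals $\min\bigl(\M_b(x_\ell \mid x^{\ell-1})/\M_s(x_\ell \mid x^{\ell-1}), \, 1/p_x^{ini}(x_{\ell-1})\bigr)$, because this quotient is exactly $p_x^{ini}(x_\ell)/p_x^{ini}(x_{\ell-1})$ by the recursive definition of $p^{ini}$, so multiplication closes the induction.

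The main obstacle is to confirm this step-wise conditional acceptance probability by unpacking the explicit construction of $\mathcal{A}_{\rm block}$ from \cite{blockV}. Concretely, I would trace how the algorithm's internal randomness (whether a shared uniform coupled across steps or fresh uniforms at each position) interacts with the running threshold $p_x^{ini}(x_{\ell-1})$ to produce a per-step acceptance that telescopes into the recursion. Writing $r(x_\ell) := \M_b(x_\ell\mid x^{\ell-1})/\M_s(x_\ell\mid x^{\ell-1})$, the algorithm must be verified to accept with conditional probability $r(x_\ell)$ when $p_x^{ini}(x_{\ell-1}) \cdot r(x_\ell) \leq 1$ and to cap the acceptance at $1/p_x^{ini}(x_{\ell-1})$ otherwise, matching the two branches of the $\min$ in the recursion for $p^{ini}$. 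Care is needed at boundary cases where $p_x^{ini}(x_{\ell-1}) = 0$, which corresponds to a prior step forcing rejection with probability one; both sides of the claimed identity are then zero by convention, so the induction step remains valid.
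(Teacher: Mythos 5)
Your attempt and the paper take fundamentally different routes to this lemma. The paper does not prove it at all; it observes that when the sampling tree degenerates to a single chain, the characterization of \emph{valid chain verification algorithm} in Lemma \ref{l1} coincides exactly with the characterization of \emph{valid draft verification algorithm} in the Block Verification paper, and then simply invokes Lemma 3 of \cite{blockV} as a black box. You instead attempt a from-scratch inductive proof, which is more ambitious but, as written, incomplete.

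There are two concrete gaps in your sketch. First, the simplification $\pr(\tau \geq \ell - 1 \mid \al^\ell = x^\ell) = \pr(\tau \geq \ell - 1 \mid \al^{\ell-1} = x^{\ell-1})$ rests on the claim that the event $\{\tau \geq \ell-1\}$ depends only on the prefix $\al^{\ell-1}$ and the algorithm's internal randomness ``up to step $\ell-1$.'' But Block Verification is explicitly a \emph{block-level} (bottom-up) scheme in which acceptance at position $\ell-1$ is allowed to depend on what happens at positions $\geq \ell$; that is what distinguishes it from token-level verification. The equality you want does hold after marginalizing (indeed both sides equal $p_x^{ini}(x_{\ell-1})$, which is precisely the content of the lemma), but asserting it mid-proof is circular unless you supply a structural argument specific to $\mathcal{A}_{\rm block}$'s construction. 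Second, you explicitly defer the key step --- showing that the conditional per-step acceptance equals $\min\bigl(r(x_\ell),\,1/p_x^{ini}(x_{\ell-1})\bigr)$ --- to ``unpacking the explicit construction of $\mathcal{A}_{\rm block}$,'' which you then do not do. This is not a minor computation; it is the substance of the lemma, and without it the induction does not close. The telescoping algebra you record (that this quotient equals $p_x^{ini}(x_\ell)/p_x^{ini}(x_{\ell-1})$) is correct, but the hard part is verifying that the algorithm actually realizes that acceptance rate. If you want a self-contained proof rather than a citation, the cleanest path is probably to mirror the paper's own inductive argument for Lemma \ref{l2}, which handles the bottom-up conditioning carefully by conditioning on $Z_{\ell+1} \neq \al_{\ell+1}$ and working with the modified distributions, rather than assuming a forward Markov-style factorization of the acceptance event.
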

\begin{lemma}[Lemma 4 in \cite{blockV}]\label{l4}
    For chain verification algorithms that satisfy the constraints in Lemma \ref{l2}, we have $\forall \ell\leqs \g$,
    \begin{equation*}
    \pr(\tau\geqslant\ell|X^\ell=\al^\ell) \leqs p_\al^{ini}(\al_\ell).
    \end{equation*}
\end{lemma}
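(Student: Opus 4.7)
The plan is to prove Lemma~\ref{l4} by induction on $\ell$. Write $q_\ell(x^\ell) := \pr(\tau \geqslant \ell,\, \al^\ell = x^\ell)$ and $\mu_\ell(x^\ell) := \M_s(x^\ell|x_0)$, so the conditional probability in the statement equals $q_\ell(x^\ell)/\mu_\ell(x^\ell)$, and the goal reduces to $q_\ell(x^\ell) \leqs p_x^{ini}(x_\ell)\cdot \mu_\ell(x^\ell)$ for every prefix $x^\ell$; the random-prefix form $\pr(\tau\geqslant\ell|\al^\ell)\leqs p_\al^{ini}(\al_\ell)$ is then the same bound read as a function of $\al^\ell$. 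The base case $\ell=0$ is immediate. For the inductive step, my plan is to use validity, encoded via Lemma~\ref{l1}, to establish the one-step comparison
\begin{equation*}
q_\ell(x^\ell) \leqs \M_b(x_\ell|x^{\ell-1})\cdot q_{\ell-1}(x^{\ell-1}).
\end{equation*}
Dividing by $\mu_\ell(x^\ell)=\M_s(x_\ell|x^{\ell-1})\mu_{\ell-1}(x^{\ell-1})$ turns this into $\pr(\tau\geqslant\ell|\al^\ell=x^\ell) \leqs r_\ell\cdot \pr(\tau\geqslant\ell-1|\al^{\ell-1}=x^{\ell-1})$ with $r_\ell := \M_b(x_\ell|x^{\ell-1})/\M_s(x_\ell|x^{\ell-1})$. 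Applying the inductive hypothesis at $x^{\ell-1}$ and combining with the trivial bound $\leqs 1$ then gives $\pr(\tau\geqslant\ell|\al^\ell=x^\ell)\leqs \min\{p^{ini}_x(x_{\ell-1})r_\ell,\,1\}=p^{ini}_x(x_\ell)$, closing the induction.

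The work is in the one-step comparison. I will decompose $\pr(Z^\ell=x^\ell)$ according to the value of $\tau$ in the extended sequence of Lemma~\ref{l1}. The case $\tau\geqslant\ell$ forces $\al^\ell=x^\ell$ and contributes exactly $q_\ell(x^\ell)$; when $\tau=k<\ell$, matching $Z^\ell$ to $x^\ell$ requires $X^k=x^k$, $Y=x_{k+1}$, and the target-distribution tail $Z_{k+2},\dots,Z_\ell$ to hit $x_{k+2},\dots,x_\ell$ with probability $\prod_{j=k+2}^{\ell}\M_b(x_j|x^{j-1})$ (empty product when $k=\ell-1$). Collecting terms and invoking validity $\pr(Z^\ell=x^\ell)=\M_b(x^\ell|x_0)$ yields
\begin{equation*}
q_\ell(x^\ell) = \M_b(x^\ell|x_0) - \sum_{k=0}^{\ell-1}\pr(\tau=k,\,X^k=x^k,\,Y=x_{k+1})\prod_{j=k+2}^{\ell}\M_b(x_j|x^{j-1}).
\end{equation*}
Writing the analogous identity for $x^{\ell-1}$, multiplying through by $\M_b(x_\ell|x^{\ell-1})$, and subtracting, all $k\leqs\ell-2$ terms cancel (the extension factor $\prod_{j=k+2}^{\ell-1}\M_b$ absorbs $\M_b(x_\ell|x^{\ell-1})$ to become $\prod_{j=k+2}^{\ell}\M_b$), and the only surviving contribution is the nonnegative residue $\pr(\tau=\ell-1,\,X^{\ell-1}=x^{\ell-1},\,Y=x_\ell)$. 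This is exactly the claimed one-step comparison.

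The main obstacle will be the careful bookkeeping in the above decomposition: I must enumerate precisely the ways the extended output starts with $x^\ell$ depending on where $\tau$ falls, check that the $k<\ell$ contributions factor as the product of a quantity $\pr(\tau=k,X^k=x^k,Y=x_{k+1})$ that depends only on $x^{k+1}$ and an extension factor that depends on $(x_{k+2},\dots,x_\ell)$, and verify that the multiplication by $\M_b(x_\ell|x^{\ell-1})$ extends each of the $\ell-1$ relevant extension factors correctly so the subtraction telescopes to a single nonnegative term. As a minor clarification, the phrase ``constraints in Lemma~\ref{l2}'' in the lemma's statement appears to refer to any \emph{valid chain verification algorithm} in the sense of Definition~\ref{def2}; the plan above uses only this validity (through Lemma~\ref{l1}) together with the elementary bound $q_\ell\leqs\mu_\ell$.
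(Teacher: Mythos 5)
Your proof is correct. What you have produced is genuinely different from the paper's handling of this lemma: the paper does not prove it at all, but simply observes that when the sampling tree is a single chain, the validity criterion in Lemma~\ref{l1} coincides with the one in the Block Verification paper, and then imports their Lemma~4 as a black box. You instead give a self-contained argument: induction on $\ell$, with the inductive step carried by a telescoping decomposition of $\pr(Z^\ell = x^\ell)$ (marginalizing the validity identity $Z^{\gamma_{\max}} \sim \M_b(\cdot|X_0)$ from Lemma~\ref{l1}) according to the value of $\tau$. Subtracting $\M_b(x_\ell|x^{\ell-1})$ times the $(\ell-1)$-version cancels the leading terms (chain rule on $\M_b$) and all $k \leqs \ell-2$ correction terms (the extension factors absorb the extra $\M_b(x_\ell|x^{\ell-1})$), leaving exactly $-\pr(\tau = \ell-1, X^{\ell-1}=x^{\ell-1}, Y=x_\ell) \leqs 0$, which is the one-step comparison $q_\ell \leqs \M_b(x_\ell|x^{\ell-1}) q_{\ell-1}$. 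Dividing by $\mu_\ell$, applying the inductive hypothesis, and capping at $1$ recovers the recursion defining $p^{ini}_x(x_\ell)$. The bookkeeping in the decomposition checks out. Your reading of ``constraints in Lemma~\ref{l2}'' as a slip for the validity condition of Definition~\ref{def2} (equivalently Lemma~\ref{l1}) also matches the intent, since the cited result in the Block Verification paper is stated for arbitrary valid draft verification algorithms. The trade-off is the usual one: the paper's approach is a one-line deferral to an external reference, while yours makes the argument verifiable within this paper at the cost of reproducing it.
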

\begin{proof}
    It suffices to observe that when the stochastic sampling tree reduces to a single-chain structure, the equivalent definition Lemma \ref{l1} of \emph{valid chain verification algorithm} in this paper is identical to the equivalent definition \cite[Lemma 2]{blockV} of the \emph{valid draft verification algorithm}.
    Therefore, Lemmas \ref{l3} and \ref{l4} hold automatically as the directly applications of \cite[Lemma 3 and Lemma 4]{blockV}.
\end{proof}

Note that when the sampling tree reduces to one single chain, Lemma \ref{l2} shows that the probability of Traversal Verification accepting at least $\ell$ tokens is
\begin{equation*}
\pr(\tau\geqslant \ell|X^\ell = \al^{\ell})=\pr(Z^{\ell}=\al^\ell)= p^{ini}_\al(\al_\ell).
\end{equation*}
By Lemmas \ref{l3} and \ref{l4}, we show that among all valid chain verification algorithms (i.e., valid draft verification algorithms satisfying the constraints in \cite[Lemma 2]{blockV}), Traversal Verification accepts any given subsequence with the highest probability as the same as Block Verification. 
Specifically, for a given decoding chain $\al=(\al_0,\dots,\al_\g)$ based on $\M_s$, we have
\begin{align*}
\E[N_{\rm traversal}] &= \E_{\al\sim\M_s} \left[\sum_\ell \pr(\tau_{\rm traversal}\geqslant\ell|X^\ell = \al^{\ell})\right] = \E_{\al\sim\M_s} \left[\sum_\ell p_\al^{ini}(\al_\ell)\right], \\
\E[N_{\rm block}] &= \E_{\al\sim\M_s} \left[\sum_\ell \pr(\tau_{\rm block}\geqslant\ell|X^\ell = \al^{\ell})\right] = \E_{\al\sim\M_s} \left[\sum_\ell p_\al^{ini}(\al_\ell)\right],\\
\E[N_{\rm verify}] &= \E_{\al\sim\M_s} \left[\sum_\ell \pr(\tau_{\rm verify}\geqslant\ell|X^\ell = \al^{\ell})\right] \leqs \E_{\al\sim\M_s} \left[\sum_\ell p_\al^{ini}(\al_\ell)\right].
\end{align*}
This implies Theorem \ref{thm:chain_op} holds.

\section{Evaluation of Generation Quality}
\label{gen_quality}
Although we have already provided a mathematically rigorous proof for the losslessness of Traversal Verification, we understand that it is also important to present experimental results regarding generation quality. We would like to emphasize that the primary application scenario of Traversal Verification lies in non-greedy generation, therefore, due to the randomness introduced by sampling and hardware fluctuations, there will be variations in the results generated each time. Consequently, the losslessness of Traversal Verification cannot be "proven" through experiments, and the measurement of generation quality serves merely as a reference.

We follow the method used in Medusa \cite{Medusa} for measuring generation quality: we use the MT-Bench \cite{mt-bench} dataset and employ a state-of-the-art LLM as a judge to evaluate the quality of generation.  

Table \ref{tab:quality} presents the evaluation of generation quality with Llama3.1-8B-Instruct (using the same 10-point scale as Medusa, higher score is better). We use Gemini-2.5-Flash \cite{gemini25} as the judge model to assess the quality of the MT-bench responses. For all experiments, we ran them three times and report the average.

\begin{table}[h]
\centering
\caption{Evaluation of generation quality.}
\label{tab:quality}
\begin{tabular}{lc@{\hspace{1.5cm}}c}
\toprule
\textbf{Method} & \textbf{Verification Strategy} & \textbf{Quality} \\
\midrule
Autoregressive & N/A & 6.72 \\
\midrule
\multirow{2}{*}[-0.5ex]{\centering Chain} & Token-level & 6.78 \\
& Traversal & 6.77 \\
\midrule
\multirow{2}{*}[-0.5ex]{\centering EAGLE Sparse Tree} & Token-level & 6.76 \\
& Traversal & 6.79 \\
\midrule
\multirow{2}{*}[-0.5ex]{\centering Binary Tree} & Token-level & 6.69 \\
& Traversal & 6.74 \\
\bottomrule
\end{tabular}
\end{table}
The results show that Traversal Verification maintains roughly the same generation quality as both naive generation and token-level verification, which serves as evidence for its lossless property.

\section{Statistical Methods and Additional Results}
\label{sec:criteria}
When calculating the acceptance length, the results may vary slightly due to different statistical methods. Specifically, the default statistical method of Spec-Bench can generally be described as "the average tokens generated per drafting-verification cycle across the whole dataset".

However, this statistical method is not entirely appropriate. Because Spec-Bench covers diverse tasks, the answer length for each task and each sample can vary significantly. For instance, text generation tasks (such as "Compose an engaging travel blog post about a recent trip to Hawaii") often have longer responses than short translation queries (such as "Translate German to English: Dennoch : Die Wahrheit auszusprechen ist kein Verbrechen"). Calculating the average acceptance length by aggregating all generated tokens will clearly be heavily influenced by long responses. Therefore, when we compute the average acceptance length, we calculate it for \emph{each item} first and then take the average across all items. This introduces a slight difference from the default metric used in Spec-Bench.

We provide the acceptance lengths obtained using different statistical methods in Table \ref{tab:statistical}. We also include the speedup ratios. To align with the official Spec-Bench benchmark results, we use EAGLE-Vicuna-7B-v1.3 \cite{LiEAGLE} as the draft model and Vicuna-7B-v1.3 \cite{mt-bench} as the target model. As shown, although the acceptance length slightly varies under different statistical methods, Traversal Verification consistently achieves a stable improvement.

\begin{table}[h]
\centering
\caption{Comparison of acceptance lengths using different statistical methods.}
\label{tab:statistical}
\begin{tabular}{ll|cc|c}
\toprule
\multirow{2}{*}{\textbf{Tree Structure}} & \multirow{2}{*}{\textbf{Verification}} & \multicolumn{2}{c|}{\textbf{Acceptance length}} & \multirow{2}{*}{\textbf{Speedup}} \\ 
 & & default (by token) & ours (by item) & \\ \midrule
\multirow{2}{*}{Chain} & Token-level & 2.57 & 2.51 & 1.77x \\ 
 & Traversal & \textbf{2.63} & \textbf{2.57} & \textbf{1.81x} \\ \midrule
\multirow{2}{*}{Binary Tree} & Token-level & 3.11 & 3.04 & 1.87x \\
 & Traversal & \textbf{3.22} & \textbf{3.12} & \textbf{1.92x} \\ \midrule
\multirow{2}{*}{EAGLE Sparse Tree} & Token-level & 3.18 & 3.10 & 2.00x \\
 & Traversal & \textbf{3.26} & \textbf{3.16} & \textbf{2.04x} \\ \bottomrule
\end{tabular}
\end{table}

\section{Traversal Order}
\label{sec:traversal order}
After the tree structure was determined, we adopt Depth-First Search (DFS) to establish the traversal order, with only minor differences from standard (pre-order) DFS. Specifically, the initial steps of a typical DFS involve starting from the root node and reaching the first leaf node, marking all intermediate nodes as visited (this can also happen for subtrees). However, our verification starts from the leaf nodes, and a node is marked as visited only after it has been verified. In other words, the verification order is conceptually post-order DFS.

\section{Sequence-level RRSw}
\label{sec:sequence-level rrsw}
\begin{figure}[h]
\centering
\includegraphics[width=0.98\linewidth]{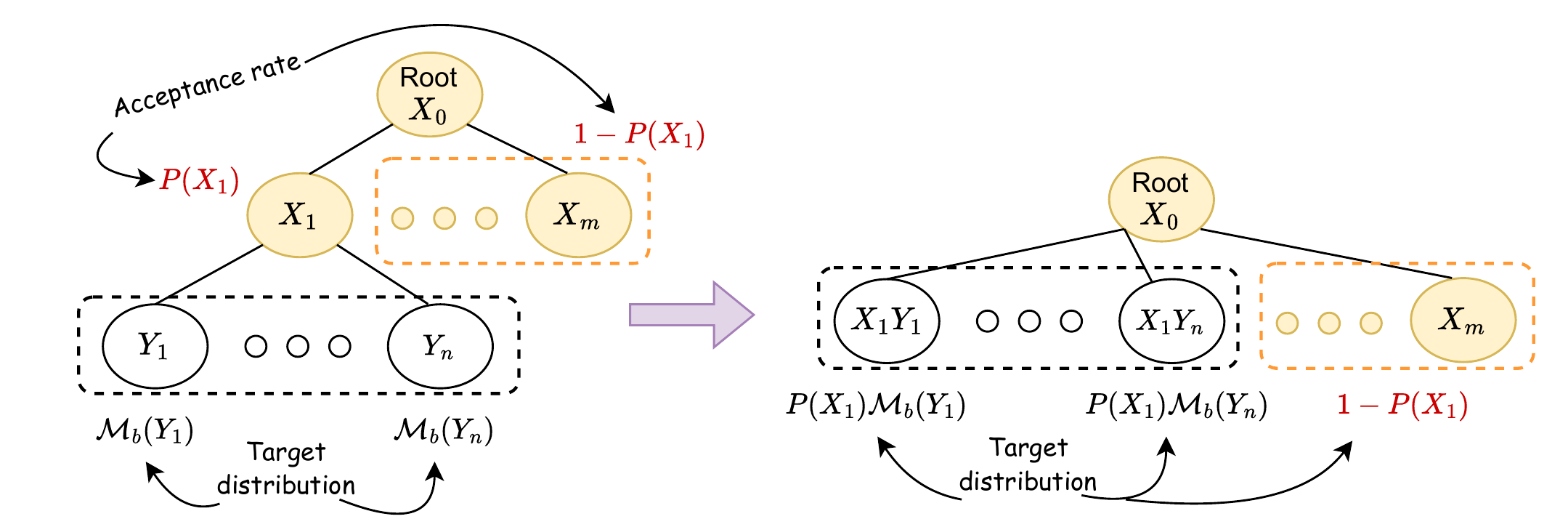}
\caption{The sequence-level RRSw for two-layers decoding tree.}
\label{fig:seq_rrsw}
\end{figure}
RRSw is a lossless probability modification method, which recursively redistributes the residual probability to other candidates after rejections, and the probabilities only "flow" within the same layer of a tree. Traversal Verification can be regarded as a sequence-level RRSw. As shown in Figure \ref{fig:seq_rrsw}, we first transform the original decoding tree on the left into the right one, and then utilize the classic RRSw algorithm to derive the correct probability transition formulas.

\section{Limitations}
\label{sec:limitations}
Despite Traversal Verification significantly enhances the performance of existing speculative decoding frameworks, there are still some limitations. Firstly, our methodology is fundamentally applied to stochastic decoding scenarios (requiring temperature > 0). In greedy decoding, where the temperature parameter is set to zero, the absence of sampling mechanisms renders all verification approaches functionally equivalent, thereby eliminating any potential performance gains from Traversal Verification. Secondly, the traversal of all tree nodes introduces additional computational overhead during the verification phase. This characteristic may compromise practical throughput in particular environments. However, this issue could be mitigated through optimized implementation, such as discarding the sub-sequences with extremely low probabilities to avoid redundant computational overheads.

\section{Broader Impacts}
\label{sec:Broader Impacts}
This paper proposes Traversal Verification, a novel speculative decoding algorithm. Traversal Verification enhances the inference speed of Large Language Models (LLMs), thereby facilitating the deployment on resource-constrained devices such as personal computers, mobile phones, and various edge devices. LLMs themselves may be applied to a wide range of scenarios, potentially leading to various positive or negative societal impacts. This work may indirectly contribute to such impacts, but does not directly produce them.

\section{Licenses for Existing Assets}
\label{sec:licenses}
We summarize the assets and available resources related to this paper in Table \ref{tab:license}.
\begin{table}[h]
    \centering
    \caption{Licenses of assets.}
    \begin{tabular}{c|c|c}
    \toprule
        \multirow{5}*{Models} & Llama3.1-8B-Instruct\tablefootnote{https://huggingface.co/meta-llama/Llama-3.1-8B-Instruct} & llama3.1 license \\ 
        ~ & Llama3.2-1B-Instruct\tablefootnote{https://huggingface.co/meta-llama/Llama-3.2-1B-Instruct} & llama3.2 license \\ 
        ~ & Llama2-7B\tablefootnote{https://huggingface.co/meta-llama/Llama-2-7b-hf} & llama2 license \\
        ~ & Llama-68M\tablefootnote{https://huggingface.co/JackFram/llama-68m} & apache-2.0 \\   
        ~ & Vicuna-7B-v1.3\tablefootnote{https://huggingface.co/lmsys/vicuna-7b-v1.3} & Non-commercial license \\ 
        ~ & EAGLE-Vicuna-7B-v1.3\tablefootnote{https://huggingface.co/yuhuili/EAGLE-Vicuna-7B-v1.3} & apache-2.0 \\ \midrule
        \multirow{2}*{Datasets \& Codes} & Spec-Bench\tablefootnote{https://github.com/hemingkx/Spec-Bench} & apache-2.0\\ 
        ~ & EAGLE\tablefootnote{https://github.com/SafeAILab/EAGLE} & apache-2.0 \\ \bottomrule
    \end{tabular}
    \label{tab:license}
\end{table}

\end{document}